\documentclass[letterpaper, times, mathptm, 10 pt, journal, twoside]{ieeeconf}
\overrideIEEEmargins                                      %

\IEEEoverridecommandlockouts
\usepackage{hyperref}
\usepackage{svg}

\usepackage{amsmath,amssymb,amsfonts}

\usepackage{algorithmic}
\usepackage{graphicx}
\usepackage{textcomp}
\usepackage{xcolor}
\usepackage{subcaption}
\usepackage[font=small,labelfont=bf]{caption} 

\usepackage{siunitx}

\usepackage[
    backend=biber,
    style=ieee,
    sorting=none,
    ]{biblatex}
\addbibresource{bibliography.bib}

\usepackage[normalem]{ulem}

\begin{document}
\newcommand{\vect}[1]{\boldsymbol{#1}}
\newcommand{\vecs}[1]{\boldsymbol{#1}}
\newcommand{\matr}[1]{\boldsymbol{#1}}
\newcommand{\matd}[1]{\mathcal{#1}}

\newcommand{\dotprod}[2]{\left\langle {#1}, \, {#2} \right\rangle}
\newcommand{\normdotprod}[2]{\frac{\left\langle #1, \, #2 \right\rangle}{\| #1 \| \, \| #2 \|}}

\newtheorem{theorem}{Theorem}[section]
\newtheorem{corollary}{Corollary}[section]
\newtheorem{lemma}{Lemma}[section]
\newtheorem{definition}{Definition}[section]

\newif\ifthesis
\thesisfalse

\newif\iflong
\longtrue

\iflong
 \newcommand{\editcolor}[1]{\textcolor{black}{#1}}
\else
\newcommand{\editcolor}[1]{\textcolor{black}{#1}}
\fi

\iflong
\else
\setlength{\headheight}{1mm}
\fi

\title{
\LARGE \bf
Passive Obstacle-Aware Control \\
to Follow Desired Velocities
}

\author{Lukas Huber$^{1}$, Thibaud Trinca$^{1}$, Jean-Jacques Slotine$^{2}$, Aude Billard$^{1}$
\thanks{
Manuscript received: January 21, 2024; 
Revised May 15, 2024;
Accepted June 27, 2024.
}
\thanks{This paper was recommended for publication by Editor Aniket Bera upon evaluation of the Associate Editor and Reviewers’ comments.}
\thanks{This work was supported by EU ERC grant SAHR.} 
\thanks{$^{1}$LASA Laboratory, Swiss Federal School of Technology in Lausanne - EPFL, Switzerland. \tt $\{$lukas.huber;aude.billard$\}$@epfl.ch }
\thanks{$^{2}$Nonlinear Systems Laboratory,  Massachusetts Institute of Technology, USA. \tt jjs@mit.edu}   
\thanks{Digital Object Identifier (DOI): see top of this page.}
}

\maketitle
\markboth{IEEE Robotics and Automation Letters. Preprint Version. Accepted June, 2024}{Huber \MakeLowercase{\textit{et al.}}: Passive Obstacle-Aware Control to Follow Desired Velocities.} 

\begin{abstract}
Evaluating and updating the obstacle avoidance velocity for an autonomous robot in real-time ensures robustness against noise and disturbances. A passive damping controller can obtain the desired motion with a torque-controlled robot, which remains compliant and ensures a safe response to external perturbations.
Here, we propose a novel approach for designing the passive control policy. 
Our algorithm complies with obstacle-free zones while transitioning to increased damping near obstacles to ensure collision avoidance. 
This approach ensures stability across diverse scenarios, effectively mitigating disturbances.
Validation on a 7DoF robot arm demonstrates superior collision rejection capabilities compared to the baseline, underlining its practicality for real-world applications. 
Our obstacle-aware damping controller represents a substantial advancement in secure robot control within complex and uncertain environments. \iflong \else \\
\editcolor{
The extended article with detailed proofs is available online }\parencite{passive2024huber_extended}. 
\fi
\end{abstract}
\begin{keywords}
Obstacle avoidance, dynamical systems, passivity, compliant control, human-robot interaction
\end{keywords}

\section{Introduction}
Robots operating in unstructured, dynamic environments must balance adapting the path to the surroundings and following a desired motion. In human-robot collaboration, they must efficiently complete their tasks while ensuring safe compliance in the presence of potential collisions.

Velocity adjustments based on real-time sensory information are essential in complex and dynamic environments. Achieving this necessitates algorithms that are easily configurable and capable of rapid evaluation. Closed-form control laws eliminate the need for frequent replanning. Dynamical Systems (DS) is a valuable framework for representing such desired motion \cite{huber2019avoidance} where the behavior of the desired first-order DS is approximated through suitable controllers. 

Most robotic systems consist of rigid materials. Consequently, the interaction of these robotic systems with the surroundings leads to abrupt energy transfers, posing the risk of damage to the robot and its environment. However, modern robotic platforms are equipped with force and torque sensing capabilities that enable precise control over the forces exerted by the robot.
yet, integrating these sensors makes feedback controllers more complex. The robot must achieve its designated position by following a desired velocity profile while remaining compliant with interaction forces. Balancing position, velocity, and force constraints is addressed by \textit{impedance controllers} \iflong
\parencite{takegaki1981new, hogan1984impedance} \else \parencite{hogan1985impedance}\fi.

Obstacle avoidance is fundamental to motion control, with reactive approaches capable of handling dynamic and intricate environments \parencite{huber2019avoidance}. The controller should remain compliant in free space while adhering to the desired motion when getting close to the obstacle. Furthermore, when encountering surfaces like fragile glass on a table, the controller must adopt stiffness to prevent a collision, yet it should be compliant when interacting with an operator (Fig.~\ref{fig:table_avoidance_with_obstacle}).

\begin{figure}
\iflong
\centerline{\includegraphics[width=.95\columnwidth]{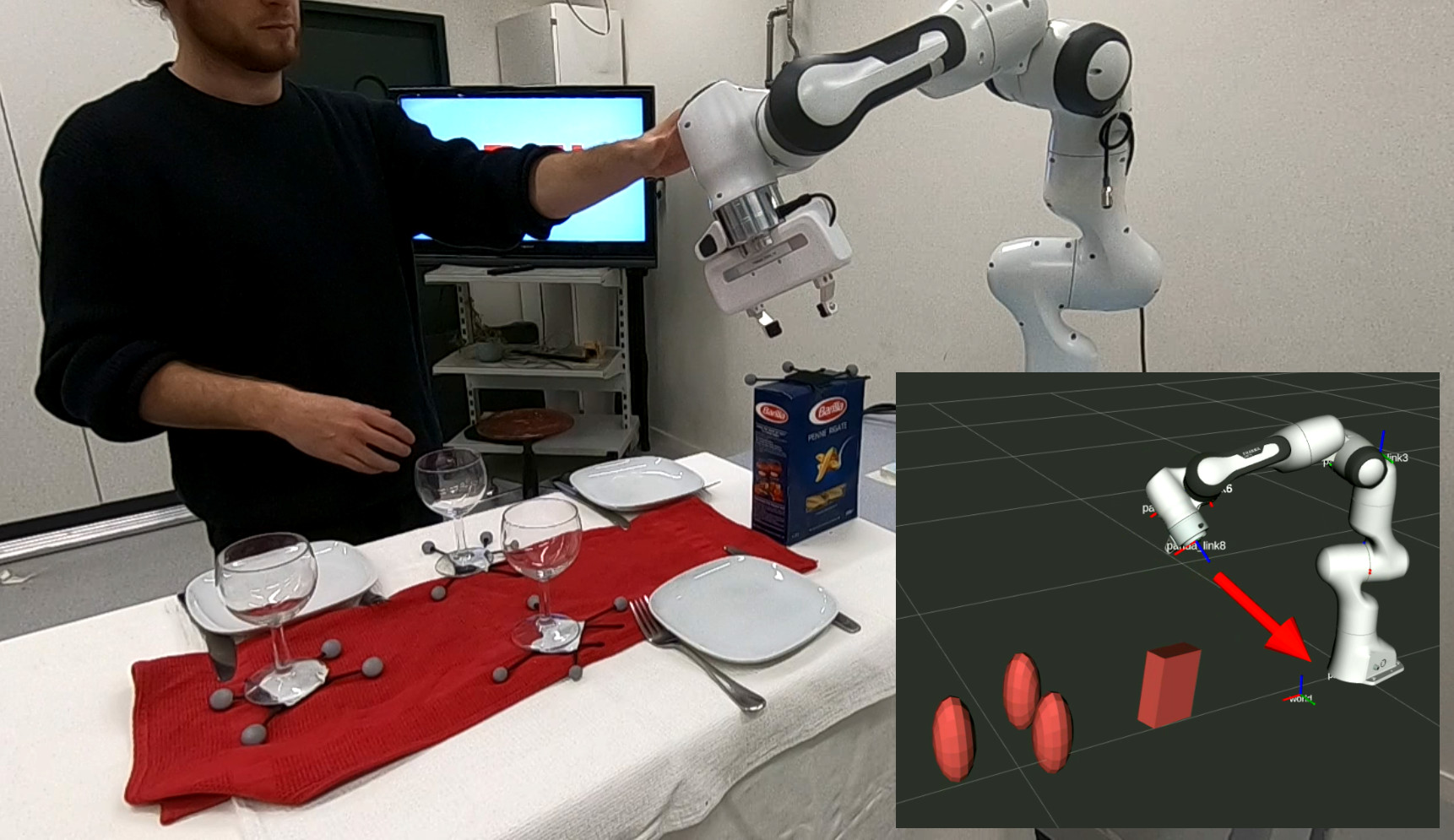}}
\else
\centerline{\includegraphics[width=.7\columnwidth]{figures/robot_arm_table_avoidance}}
\fi
\caption{
The proposed passive obstacle-aware controller lets the robot absorb external disturbances while ensuring collision avoidance. 
While tipping over the closed pasta box on this dinner table setup might be acceptable. Yet, the delicate wine glasses demand careful handling to prevent breakage.
}
\label{fig:table_avoidance_with_obstacle}
\end{figure}

This work introduces a novel approach incorporating dynamic obstacle avoidance using DS and variable impedance control, enhancing adaptability, reactivity, and safety in robotic movements. It empowers robots to navigate complex environments, proactively avoiding collisions and rejecting disturbances. Our approach is evaluated through with a 7-degree-of-freedom (7DoF) robot arm, demonstrating robust and safe control in real-world scenarios.

\subsection{Related Work}

\subsubsection{Force Control}
Impedance control, a powerful feedback algorithm, effectively applies Cartesian impedance to nonlinear manipulators' end-points \iflong \parencite{takegaki1981new, hogan1985impedance} \else \parencite{hogan1985impedance} \fi. The controller replaces the computationally intensive \textit{inverse kinematics} with the more straightforward \textit{forward kinematics}. Impedance control establishes a dynamic relationship between desired position, velocity, and force, offering a holistic control framework.
Initially, impedance controllers employed constant stiffness, but researchers have explored various dynamic control parameter approaches to enhance adaptability in complex environments \parencite{vanderborght2013variable, abu2020variable}. However, introducing dynamic parameters into the control framework requires taking special care of the system's stability.
Furthermore, admittance control is designed to adapt to external forces while remaining stable contact \parencite{glosser1994implementation}. Admittance control can be interpreted as a special type of impedance control.

Passive velocity controllers use a state-dependent velocity, converted into a control force through a damping control law. Since the controllers have variable damping parameters, stability can be guaranteed using storage tanks \parencite{li1999passive}. 
\iflong
Furthermore, complex control parameters can be learned from human demonstrations. By continuously adapting the parameters, the controller can be observed to improve its tracking performance in direct human-robot collaboration tasks \parencite{gribovskaya2011motion}.
\fi
However, high compliance often results in decreased motion following. Yet, carefully designing the damping matrix, which increases stiffness in the direction of motion but remains compliant otherwise, results in improved convergence \parencite{kronander2015passive}. 
Combining impedance controllers with admittance controllers can be used to increase accuracy in cooperative control
\parencite{fujiki2022series}.
However, these controllers' adaptations focus on improving movement accuracy rather than actively rejecting disturbances.

\iflong
Impedance controllers play an important role in interaction tasks, as in these situations, the robot is subjected to forces that are hard to predict precisely. However, they must be actively managed to ensure stable contact without damage occurring. In these situations, passive controllers using storage tanks based on the system's energy allow to limit contact force \parencite{kishi2003passive}.
A general framework can be used to ensure that position-, torque-, and impedance controllers exhibit passivity during interaction tasks. This is achieved by interpreting the torque feedback as the shaping of the motor inertia. It allows flexible robot arms for complex interaction tasks, such as insertion or wiping \parencite{albu2007unified}. 
By sensing the interaction force and actively adapting the trajectory based on the physical interaction force and the virtual repulsive force from the obstacle. This can be used for obstacle-aware motion generation \parencite{haddadin2010real}.
\fi

\iflong
Teleoperated systems of robot arms come with control delays and require a closed-loop controller of the robot that can adapt autonomously, ensuring stable and reliable performance. Passive controllers present themselves perfectly for such a task  \parencite{stramigioli2005sampled}. This method addresses the intricate dynamics of interactive systems, ensuring stable and reliable performance.
A similar approach involves slowly updating the desired position while incorporating a spring-damper model through an impedance controller, enabling seamless interaction with the teleoperated robotic system \parencite{lee2010passive}.
However, these models require human input for active collision avoidance.
\fi

Many impedance controllers with time-varying control rely on energy tanks to ensure stability. This is a virtual state, which is filled or emptied depending on the controller command. Limiting the energy tank to a maximum value ensures the system's stability  \parencite{ferraguti2013tank}. However, when this limit is reached, the controller is constrained and interferes with the controller's optimal functioning. This can result in the controller not achieving some functionalities, such as collision avoidance.
Alternatively, the impedance controller can be constrained by adapting the damping and stiffness and the rate of change based on a Lyapunov function \cite{kronander2016stability}.

\subsubsection{Obstacle Avoidance}
In dynamic environments, obstacle avoidance needs to be evaluated to fast ensure safe navigation. Repulsive force fields, that pointing away from obstacles create a collision-free motion \parencite{khatib1987unified}. 
However, these artificial potential fields are susceptible to local minima, which led to the introduction of navigation functions, i.e., a global function that creates repulsive fields while ensuring a single, global minimum \parencite{koditschek1990robot}. Such functions depend on the distribution of the obstacles and are hard to adapt to dynamic environments and high dimensional spaces \parencite{loizou2022mobile}.

Passive controllers can also track the desired motion of the artificial potential field while compensating for Coriolis and centrifugal forces \parencite{duindam2004passive}. 
Nonetheless, these methods lack the guarantee of disturbance repulsion around obstacles, which is addressed by the integration of circular fields, which rotates the robot's path around the obstacles \parencite{singh1996real}. 
This allows force-controlled navigation in cluttered environments, yielding convergence for simple obstacles \parencite{haddadin2011dynamic}. 
Conversely, repulsive fields can be combined with elastic, global planning \parencite{brock2002elastic} for improved convergence. This allowed adding a repulsive force from the obstacle, ensuring collision avoidance \parencite{tulbure2020closing}. 
However, methods based on artificial potential fields are prone to local minima in cluttered environments.

\iflong
Traditional tracking controllers often require complex linearization or simplification methods. However, a class of geometric tracking controllers enables exact control of nonlinear mechanical systems with low computational cost \parencite{udwadia2003new}. By reformulating control problems as a specific class of optimal controllers, this approach facilitates the derivation of standard control problems in robotics \parencite{peters2008unifying}. As a result, many geometric controllers directly output a control force and hence do not rely on an additional (impedance) controller.
Integrating local Riemannian Motion Policies (RMP) has led to globally stable force-controlled motion \parencite{cheng2020rmp}. Moreover, advancements in position-dependent Riemannian metrics allow for improved task design using RMP and reactive force control under constraints \parencite{bylard2021composable}.
Geometric fabrics have emerged as a valuable mathematical tool for shaping a robot's nominal behavior while capturing essential constraints like obstacle avoidance, joint limits, and redundancy resolution \parencite{xie2020geometric}. Combining Finsler geometry and geometric fabrics has further enhanced path consistency \parencite{ratliff2021generalized}.
Integrating geometric fabric methods into classical mechanical systems has enabled various physical behaviors, notably exemplified in multi-obstacle avoidance for a 7DoF robot arm \parencite{van2022geometric}. Despite these significant achievements, the parametrization of geometric methods and their application to general systems remains challenging and hinders their wide acceptance.
\fi

Harmonic potential functions can ensure the absence of local minima in free space \parencite{connolly1997harmonic}. In previous work, we combined harmonic potential functions with the dynamical system framework to obtain reactive, local minima-free motion \parencite{huber2019avoidance, huber2023avoidance}. It allows for generating a desired velocity based on the robot's position in real time.
For implementation on a torque-controlled robot arm, we utilized a passive controller that closely adheres to the desired velocity \parencite{kronander2015passive}. However, one of the limitations of the passive controller is its inability to account for the physical surroundings. This makes the robot susceptible to disturbances close to obstacles, potentially leading to collisions.
Although DS passive-controlled robots work well in interactive scenarios, they cannot ensure they navigate through an obstacle environment collision-free. For example, they often give in when being pushed toward an obstacle. This work presents a method to address this problem by modifying the passive control law design, making the controller aware of its environment.
  
\subsection{Contribution}
We introduce a passive controller that incorporates into the feedback control loop as visualized in Figure~\ref{fig:control_scheme_passive}. 
In this work, we make the following contributions:
\begin{itemize}
\item A novel obstacle-aware passive controller
(Sec.~\ref{sec:obstacle_aware_passivity})
\item A passivity guarantee (without storage tank) which applies to general damping controllers (Theorem~\ref{theorem:passivity})
\ifthesis
{}
\else
\item A collision avoidance analysis which provides insight into the path consistency around obstacles (Sec.~\ref{sec:collision_avoidance})
\fi
\iflong
\item Discrete-time analysis to enable control parameter design which ensures stability (Section~\ref{sec:discrete_time_behavior})
\else
\fi
\item Implementation and testing on 7DoF robot arm (Sec.~\ref{sec:evaluation})
\end{itemize}

\begin{figure}[thb]
  \center
  \includegraphics[width=1.0\columnwidth]{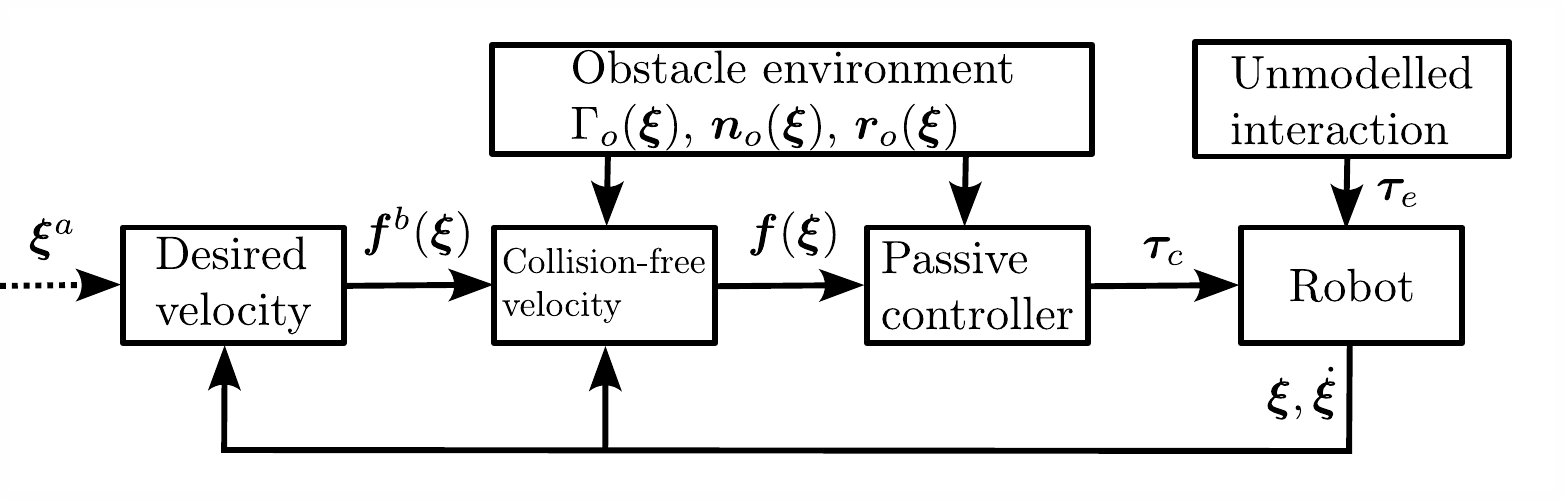}
\caption{The desired velocity $\vect f^b(\vecs \xi)$ can result from a learned velocity field or pointing towards a desired attractor $\vecs \xi^a$. The desired velocity is used to evaluate the obstacle avoidance velocity $\vect f(\vecs \xi)$, fed into the force controller to obtain the control force $\vect \tau_c$. In order to achieve collision avoidance, the distance function $\Gamma_o(\vecs \xi)$, the normal direction $\vect n_o(\vecs \xi)$, and the reference direction $\vect r_o(\vecs \xi)$ are evaluated for each obstacle $o = 1, ..., N^\mathrm{{obs}}$.}
\label{fig:control_scheme_passive}
\end{figure}

\section{Preliminaries}
Let $\vecs\xi \in \mathbb{R}^N$ describe the system's \editcolor{configuration} in an $N \geq 2$ dimensional space, e.g., the robot's joint or Cartesian space positions.
The function $\vecs f(\vecs \xi): \mathbb{R}^N \rightarrow \mathbb{R}^N$ represents a smoothly defined dynamical system (DS) describing the desired velocity at a given state $\vecs \xi$.  
The first and second-order time derivatives are denoted by one and two dots over the symbol respectively, i.e., $\dot{\vecs \xi} =\frac{d}{dt} \vecs \xi$ is the systems velocity, and $\ddot{\vecs \xi} = \frac{d^2}{dt^2} \vecs \xi$ is the acceleration.
In general, superscripts are used for variable names, whereas subscripts are used for enumerations.
\editcolor{The development of the presented work is for starshaped obstacles, i.e., shapes for which a reference point exists, from which a line in any direction only crosses the surface once \cite{huber2019avoidance}.}

\subsection{Obstacle Avoidance}
\editcolor{
This section is based on development proposed in \cite{huber2019avoidance, huber2022avoiding}, these articles contain more deteailled elaborations.
}

\subsubsection{Obstacle Definition}
\editcolor{
The distance function $\Gamma(\vecs \xi): \mathbb{R}^N \setminus \mathcal{X}^i \mapsto \mathbb{R}_{\geq 1}$ divides the space into three regions:
\begin{align}
  &\text{Exterior points:}&  \qquad & \mathcal{X}^e = \{\vecs \xi \in \mathbb{R}^N: \Gamma(\vecs \xi) > 1 \} \nonumber \\ 
  &\text{Boundary points:}&  \qquad & \mathcal{X}^b = \{\vecs \xi \in \mathbb{R}^N: \Gamma(\vecs \xi) = 1 \} \label{eq:levelFunc} \\
  &\text{Interior points:}&  \qquad & \mathcal{X}^i = \{ \vecs \xi \in \mathbb{R}^N \setminus (  \mathcal{X}^e \cup \mathcal{X}^b ) \} \nonumber
\end{align}
By construction $\Gamma(\cdot)$ increases monotonically with increasing distance from $\vecs \xi^r$. In this work, we use:
\begin{equation}
  \Gamma(\vecs \xi) = 1 + \| \vecs \xi - \vecs \xi^b \|  / R^0
  \quad \text{with} \quad
  \vecs \xi^b = b (\vecs \xi - \vecs \xi^r) + \vecs \xi^r
  \label{eq:distance_function}
\end{equation}
with $b \in \mathbb{R}_{>0}$, such that $\vect \xi^b \in \mathcal{X}^b$, i.e., on the surfacee, and $R^0 \in \mathbb{R}_{>0}$ is the distance scaling. We use $R^0 = 1$.
}

\subsubsection{Avoidance Algorithm}
Let us assume the base velocity $\vect f^b(\vecs \xi): \mathbb{R}^N \rightarrow \mathbb{R}^N$, which describes the desired, configuration-dependent motion of the robot. 
The obstacle avoiding velocity \editcolor{$\vect f(\vecs \xi): \mathcal{X}^e \rightarrow \mathcal{X}^e$} can be achieved by a simple matrix multiplication (or modulation) as follows:c
\editcolor{
\begin{equation}
\begin{split}
  & \vecs f(\vecs \xi) = \textbf{E}(\vecs \xi) \text{diag} \left(\lambda^r, \lambda^e, ..., \lambda^e \right) \textbf{E}(\vecs{\xi})^{-1} \vect f^b(\vecs \xi) \\
%
& \text{with} \quad
\textbf{E}(\vecs \xi) = \left[ \textbf{r}(\vecs \xi) \ \textbf{e}_1(\vecs \xi) \ ... \ \textbf{e}_{N-1}(\vecs \xi) \right]
\end{split}
  \label{eq:modulated_ds}
\end{equation}
}
where the tangent directions $\textbf{e}_{(\cdot)} \in \mathbb{R}^N$ are perpendicular to the surface normal \editcolor{$\vect n(\vecs \xi)  = \nabla \Gamma(\vecs \xi)  \in \mathbb{R}^N$}, see Fig.~\ref{fig:resultant_normal}. The reference vector $\textbf{r}(\vect{\xi}) =  \left( \vecs{\xi}-\vecs{\xi}^r \right) / \| \vecs \xi-\vecs \xi ^r \|$ is pointing \editcolor{away} from the reference point $\vecs \xi^r \in \mathbb{R}^N$. 

The diagonal values $\lambda_{(\cdot)}$ in \eqref{eq:modulated_ds} are often referred to as eigenvalues since they modify the length of the input velocity in specific directions. 
The eigenvalue in reference direction $\lambda^r(\vecs \xi) \leq 1$, is designed to reduce the velocity towards the obstacle.  
Conversely, the velocity increases along the tangent direction using $\lambda^e(\vecs \xi) \geq 1$. The eigenvalues in reference direction $\lambda^r$ and tangent direction $\lambda^e$ are defined as:
\begin{equation}
\begin{split}
    \lambda^r(\vecs \xi) = 1 - 1 /\Gamma(\vecs \xi) , \quad \lambda^e(\vecs \xi) = 1 + 1 / \Gamma(\vecs \xi)
    \label{eq:eigenvalues}
    \end{split}
\end{equation}
with the continuous distance function $\Gamma(\vecs \xi)$ given in \eqref{eq:distance_function}.

\begin{figure}[b]
\iflong
\centerline{\includegraphics[width=0.95\columnwidth]{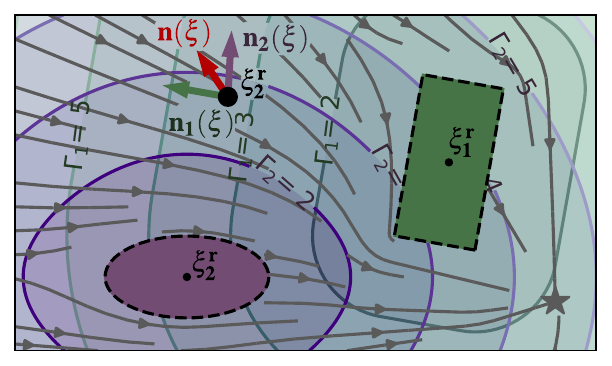}}
\else
\centerline{\includesvg[width=0.8\columnwidth]{figures/normal_and_gamma_field_visualization_annotated.svg}}
\fi
\caption{
The $\Gamma$-field is defined individually for each of the obstacles. At each position $\vecs \xi$, we can evaluate the surface normal $\vect n(\vecs \xi)$. 
The velocity $\vect f(\vecs \xi)$ (gray) avoids collision with the obstacles and converges towards the attractor (star).}
\label{fig:resultant_normal}
\end{figure}

\iflong
Modifying the base velocity $\vect f^b(\vecs \xi)$ with these eigenvalues values and \eqref{eq:modulated_ds} leads to obstacle-avoiding dynamics $\vect f(\vecs \xi)$ which generate converging motion around starshaped obstacles.
\fi

\subsection{Force Control}

\subsubsection{Rigid Body Dynamics}
A force-controlled system is subject to acceleration, inertia, and external disturbances. Its general rigid-body dynamics based on the state $\vecs \xi$: 
\begin{equation}
\matd{M}(\vecs\xi)\vecs{\ddot\xi} + \matd{C}(\vecs\xi, \vecs{\dot\xi})\vecs{\dot\xi} + \vect g(\vecs\xi) = \vecs{\tau_c} + \vecs{\tau_e}
 \label{eq:robot_dynamics}
\end{equation}
where we have the mass matrix of the robot $\matd M(\vecs\xi) \in \mathbb{R}^{N \times N}$, the Coriolis matrix $\matd C(\vecs\xi,\vecs{\dot\xi}) \in \mathbb{R}^N$, the gravity vector $\vect g(\vecs\xi) \in \mathbb{R}^N)$, the control torque $\vecs{\tau_c} \in \mathbb{R}^N$, and the external torque, also referred as disturbance, $\vecs{\tau_e} \in \mathbb{R}^N$.

\subsubsection{Damping Controller}
Damping control \cite{kronander2015passive} offers a powerful method for computing control forces from a velocity field. This controller provides selective disturbance rejection based on the direction of the desired motion. Typically, the controller is configured with high damping along the direction of motion, ensuring rapid convergence of the robot's velocity to the desired value and achieving excellent tracking performance. In contrast, the controller exhibits high compliance in the direction perpendicular to the motion, enabling flexible behavior and greater resistance to external forces. The passive control force is defined as
\begin{equation}
	\vecs{\tau_c} = \vect g (\vecs\xi) 
	+ \matd{D}(\vecs\xi) \left(\vecs f(\vecs\xi) - \vecs{\dot\xi} \right) 
\label{eq:control_command}
\end{equation}
This control law embeds a gravity compensation term $\vect g (\vecs\xi) \in \mathbb{R}^N$ and a positive-definite damping term, which dampens the difference between the desired velocity $\vecs f(\vecs\xi)$ and the actual velocity $\vecs{\dot\xi}$.
The positive definite damping matrix $\matd D(\vecs\xi) \in \mathbb{R}^{N \times N}$ is given as:
\begin{equation}
   \matd {D}(\vecs \xi) = \matd{Q}(\vecs\xi)\matd{S}(\vecs\xi) \matd{Q} (\vecs \xi)^{-1}
\label{eq:damping_matrix}
\end{equation}
where $\matd Q(\vecs \xi) = \left[ \vecs{q}_1 \;  \vecs q_2  \; ... \; \vecs q_N \right] $ is an orthonormal basis matrix, of which the first vector is pointing in the desired direction of motion
\begin{equation}
    \vecs q_1(\vecs \xi) = \vecs q_1^f(\vecs \xi) = \vect f({\vecs \xi}) / \lVert \vect f({\vecs \xi}) \rVert \label{eq:velocity_unit_vector}
\end{equation}

The diagonal matrix $\matd{S}(\vecs\xi) \in \mathbb{R}^{N \times N}$ consists of damping factors $s_i \in \mathbb{R}_{>0}$ in directions $i = 1, ... , N$.
This design allows the separate design of the damping in the direction of motion and perpendicular to the motion.
Increased consistency with the desired velocity is achieved through a high value for the first damping factor. Conversely, the damping in the remaining directions is lower for more compliance perpendicular to the motion, i.e., $s_i / s_1 \ll 1, \; i = 2,  ... , N$.

\iflong
\subsection{Stability Analysis} \label{sec:trad_passive}
Considering varying control parameters carefully is crucial, as such a design can inject energy into a system, potentially leading to unstable behavior and damaging the system \cite{ferraguti2013tank}.
In human-robot interaction, the robot faces external disturbances of unknown nature. To achieve stable and bounded behavior in the face of such disturbances, \textit{passivity} analysis is a valuable tool. By employing passivity analysis, the control system can be designed to maintain stable responses (bounded system) in the presence of any external force (bounded input). 

\begin{definition}[Passivity \cite{willems1972dissipative, sepulchre2012constructive}] \label{def:passivity}
	A dynamical system with input $ u \in \mathcal{U}$ and output $y \in \mathcal{Y}$ is passive with respect to the supply rate $s : \mathcal{U} \times \mathcal{Y} \rightarrow{R}$ if, for any $u: \mathbb{R}_{>0} \rightarrow \mathcal{U}$ and any time $t^* \geq 0$ the following is satisfied
  \begin{equation}
	  \int_0^{t^*} s \left( u(t),  y (t) \right) d \tau \geq S(t^*) - S(0) 
  \end{equation}
  where $S(t) \in \mathbb{R}_{\geq 0}$ is the storage function.
\end{definition}

The feedback loop combining two passive systems results in a passive system \cite{sepulchre2012constructive}. Hence, if the controller is passive, its application to a (passive) environment-hardware system results in a passive system.
\fi

\iflong
\subsection{Problem Statement}
Following assumptions are made about the desired velocity $\vecs f(\vecs\xi)$:
\begin{enumerate}
    \item $\vecs f(\vecs\xi)$ is continuous for all reachable states.
    \item $\vecs f(\vecs\xi)$ is bounded, i.e., there exists a constant $v^{\mathrm{max}} \in \mathbb{R}$ such that $\| \vecs f(\vecs\xi) \| \leq v^{\mathrm{max}} \;\; \forall \, \vecs \xi \in \mathbb{R}^N$
    \item $\vecs f(\vecs\xi)$ leads to a collision-free motion, i.e., $\vecs{n_o}(\vecs\xi)^T \vecs f(\vecs\xi) \geq 0$ as $\Gamma_o(\vecs \xi) \rightarrow 1 \quad \forall o = 1, ..., N^{\mathrm{o}}$ with the normal $\vecs n_o$ and distance $\Gamma_o$ of the $o$-th obstacle. 
\end{enumerate}

Note that velocity obtained using the obstacle avoidance method described in \eqref{eq:modulated_ds}, fulfills these conditions if base velocity $\vecs f^b(\vecs \xi)$ is continuous and bounded.

\fi

\section{Obstacle-Aware Passivity} \label{sec:obstacle_aware_passivity}
We propose a novel controller, which ensures passivity as defined in \eqref{eq:control_command} but adapts the damping matrix given in \eqref{eq:damping_matrix} based on the desired velocity $\dot{\vecs \xi}$ and obstacles in the surrounding. 
Far away from obstacles, the system is designed to follow the initial velocity, but approaching the obstacle increases the damping, decreasing the chance of a collision.

\ifthesis
\begin{figure}
\centerline{\includegraphics[width=0.7\columnwidth]{figures/normal_and_gamma_field_visualization_annotated.pdf}}
\caption{
The $\Gamma$-field is defined individually for each of the obstacles. At each position $\vecs \xi$, we can evaluate the surface normal $\vect n(\vecs \xi)$. 
The velocity $\vect f(\vecs \xi)$ (gray) avoids collision with the obstacles and converges towards the attractor (star).}
\label{fig:resultant_normal}
\end{figure}
\fi

Hence, the damping matrix $\matd D(\vecs\xi) \in \mathbb{R}^{N \times N}$ smoothly changes from being aligned with the direction of the velocity, as used in \parencite{kronander2015passive}, to be aligned with the averaged normal of the obstacles. The desired damping matrix transitions between velocity preserving and collision avoidance using a smoothly defined linear combination:
\editcolor{
\begin{equation}
	\matd D(\vecs\xi, \dot{\vecs \xi}) = \left(1 - w(\vecs\xi) \right) {\matd D^{f}}(\vecs\xi) + w(\vecs\xi)  {\matd D^{\mathrm{o}}}(\vecs\xi, \dot{\vecs \xi}) \label{eq:damping_summation}
\end{equation}
}
The damping matrix is made up of two components: the velocity damping. $\matd D^f \in \mathbb{R}^{N \times N}$ which prioritizes following the desired velocity similar to \parencite{kronander2015passive}, and the obstacle damping $\matd D^{\mathrm{o}} \in \mathbb{R}^{N \times N}$ which is designed to reject disturbances towards obstacles. The two damping matrices are positive definite and are smoothly summed using the danger weight $w(\vecs\xi) \in [0, 1]$. Far away from obstacles the weight reaches $w(\vecs \xi) = 0$, whereas $w(\vecs \xi) = 1$ when approaching a boundary:
\begin{equation}
  \begin{split}
w(\vecs\xi) =
\max \left(0,  \frac{\Gamma^{\mathrm{crit}} - \Gamma(\vecs\xi)}{\Gamma^{\mathrm{crit}} - 1} \right) \| \vecs n(\vecs \xi) \| \\
\text{with} \quad
\Gamma(\vecs\xi) = \min_{o = 1, ..., N^{\mathrm{obs}}} \Gamma_o(\vecs\xi)
\end{split}
\label{eq:weight_function}
\end{equation}
The critical distance \iflong $\Gamma^{\mathrm{crit}} \in \mathbb{R}_{>0}$ \fi defines the distance where the system increases the damping towards the obstacle.
${\matd D^f}(\vecs\xi)$ and $\matd {D^{\mathrm{obs}}}(\vecs\xi)$ follow design given in \eqref{eq:damping_matrix} and are positive definite matrices, thus $\matd {D}(\vecs\xi)$ is positive definite, too.

\iflong
\begin{figure}
  \center
  \ifthesis
  \includegraphics[width=0.5\columnwidth]{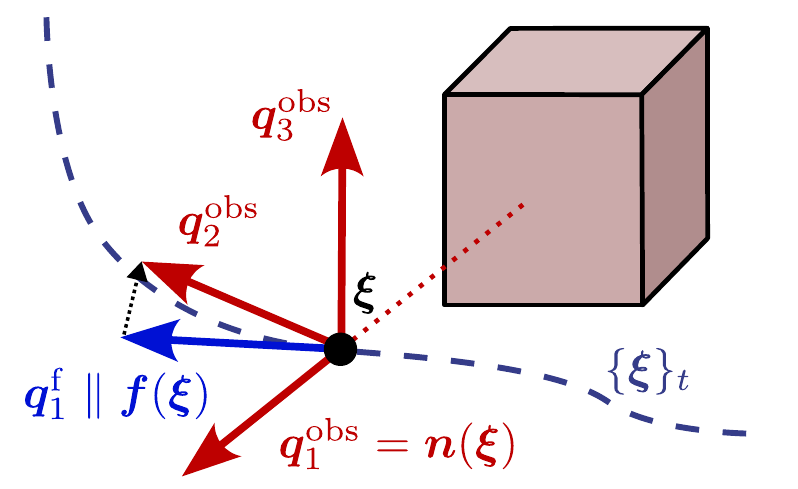}
  \else
  \includegraphics[width=1\columnwidth]{figures/damping_basis_construction.pdf}
  \fi
\caption{The damping matrix enforcing desired velocity following $\matd{D}^{f}$ has the first basis vector $\vect q_1^{f}$ which points along the avoidance velocity $\vect f(\vecs \xi)$. The damping matrix to enforce collision avoidance $\matd{D}^{\mathrm{obs}}$ uses the normal $\vect n(\vecs \xi)$ to construct the first direction of the decomposition basis $\vect q^{\mathrm{obs}}_1$.}
\label{fig:damping_basis_construction}
\end{figure}
\fi

\subsection{Damping for Collision Repulsion} \label{sec:obstacle_repulsion}

\subsubsection{Normal Direction}
The damping matrix $\matd D^{\mathrm{o}}(\vecs \xi)$ rejects velocities in the direction of the obstacles. To allow this, we introduce an averaged normal direction:
\begin{equation}
  \vecs n(\vecs\xi) = \sum_{o=1}^{N^{\mathrm{o}}} \vecs{n_o}(\vecs\xi)
  \frac{1 / (\Gamma_o(\vecs \xi) - 1)}{\sum_{p=1}^{N^\mathrm{o}} 1 / (\Gamma_p(\vecs \xi) - 1)}
  \label{eq:averaged_normal}
\end{equation}
 where the unit normals $\vecs{n_o} (\vecs\xi)$ pointing away from the obstacle $o = 1,  ..,  N^{\mathrm{o}}$ and are perpendicular to the surface, see Figure~\ref{fig:resultant_normal}. 

The averaged normal $\vecs n(\vecs \xi)$ is a weighted linear combination of the obstacles' normals, giving more importance to closer obstacles.
Additionally, the averaged normal converges to an obstacle normal as we converge towards it, i.e., $\lim_{\Gamma_o(\vecs \xi) \rightarrow 1} \vecs n(\vecs \xi) = \vecs n_o(\vecs \xi)$.
Note that the averaged normal is a zero-vector when two obstacles oppose each other. 

\subsubsection{Decomposition Matrix}
The decomposition matrix $\matd Q^{\mathrm{o}}(\vecs \xi)$ has its first vector aligned with the normal to the obstacle: 
\begin{equation}
    \vecs q_1^{\mathrm{o}}(\vecs\xi) =  \vecs n(\vecs\xi) / \lVert\vecs n(\vecs\xi)\rVert 
    \quad \forall \, \vecs \xi : \lVert\vecs n(\vecs\xi)\rVert  > 0
    \label{eq:first_obstacle_basis}
\end{equation}

In the case that $\lVert\vecs n(\vecs\xi)\rVert = 0$, the danger weight $w(\vecs \xi)$ from \eqref{eq:weight_function} reaches 0. Hence, the obstacle-aware damping in \eqref{eq:damping_summation} has no effect and is not evaluated.

The second vector is set to be aligned with the desired velocity as much as possible, allowing increased velocity following \iflong (Fig.~\ref{fig:damping_basis_construction}) \else (Fig.~\ref{fig:resultant_normal})\fi. However, it has to remain orthonormal to $\vect q_1^{\mathrm{o}}$
\begin{equation}
  \vecs q_2^{\mathrm{o}} = \frac{\hat{\vecs q}_2^{\mathrm{o}}}{\| \hat{\vecs q}_2^{\mathrm{o}} \|}
  \quad
  \hat{\vecs q}_2^{\mathrm{o}} = \vecs q_1^{f} - \vecs q_1^{\mathrm{o}} p \quad  \forall \, \vecs \xi : | p | < 1
\end{equation}
where velocity unit vector $\vecs q_1^{f}$ is defined in \eqref{eq:velocity_unit_vector}, and the object weight is evaluated as $p = \dotprod{\vecs q_1^{\mathrm{o}}}{\vecs q_1^{f}}$. 
For the case that $| p | = 1$, the second basis $\vecs q_2^{\mathrm{o}}$ is set to be any orthonormal vector. The remaining vectors $\vecs q_d^{\mathrm{o}}, d = 3, ..., N$ are constructed to form an orthonormal basis to the first two.

\subsubsection{Damping Values}
We define the values of the diagonal matrix $\matd{S}^{\mathrm{o}}(\vecs \xi)$ as\begin{equation}
  \matd{S}_d^{\mathrm{o}}(\vecs \xi) =
  \begin{cases}
    s^{\mathrm{o}} & d = 1 \\
    | p | s^c + (1 - | p |) s^{f} & d = 2 \\
    s^c & d = 3, ..., N
  \end{cases}
  \label{eq:obstacle_damping_values}
\end{equation}
where the damping along the nominal direction $s^{f} \in \mathbb{R}_{>0}$, obstacle-damping $s^{\mathrm{o}} \in \mathbb{R}_{>0}$, and the compliant-damping $s^c \in \mathbb{R}_{>0}$ are user-defined parameters which determine the behavior of the passive-controller. The first entry of $\matr{S}^{\mathrm{o}}$ dictates the damping towards the obstacle, and the second entry the desired velocity following. Note how the second value approaches the compliant damping, as normal and velocity become parallel.

To ensure continuity across time of the control force as defined in \eqref{eq:control_command}, the values of the diagonal damping matrix  $\matd{S}^{\mathrm{o}}(\vecs \xi)$ in the tangent directions are equal when the normal aligns with the velocity:
\begin{equation}
    \lim_{| p | \rightarrow 1} \matd{S}_d = \matd{S}_e, 
    \quad d > 2, ..., N, \, e > 2, ..., N
\end{equation}
Hence, the choice of orthonormal vectors $\vecs q_d^{\mathrm{o}}, d > 2, ..., N$ does not influence the control force as long as the matrix $Q^{\mathrm{d}}$ has full rank.

\subsubsection{Damping Only Towards Obstacle} \label{sec:damping_only_toward}
In the presence of an obstacle, the disturbances should be damped strongly when the agent is pushed against the obstacle. Conversely, the system can remain compliant if the motion is away from the obstacle. This is achieved by setting updating the first damping value $\matd{S}^{\mathrm{o}}_1$ if the robot is moving away from the surface: 
\editcolor{
\begin{equation}
	\matd{S}_1^{\mathrm{o}} (\vecs \xi, \dot{\vecs \xi}) =
  \begin{cases}
    s^{\mathrm{o}} & \text{if} \;\; \left(\vect f(\vecs \xi) - \vecs{\dot \xi} \right)^T \vecs n(\vect \xi) > 0 \\
    s^{\mathrm{c}} & \text{otherwise}
  \end{cases}
  \label{eq:leaving_compliance}
\end{equation}
}

Since $\vect q_1^o(\vecs \xi)$ given in \eqref{eq:first_obstacle_basis} is pointing along the obstacle normal $\vect n(\vecs \xi)$, the first obstacle damping value $\matd{S}_1^{\mathrm{o}} (\vecs \xi)$ does not have any effect on the control force $\vect \tau^c$ given in \eqref{eq:control_command} when $\left(\vect f(\vecs \xi) -  \vecs{\dot \xi} \right)^T \vecs n(\vect \xi) = 0$.
Hence, the damping value can be discontinuous across time, but the resulting control force remains continuous.

\subsection{Damping for Velocity Preservation}
The decomposition matrix $\matd{Q}^{f}$ is an orthonormal basis with the first vector being parallel to the desired velocity $\vect f(\vecs\xi)$\iflong, as proposed in Section~\ref{sec:trad_passive}\fi. Hence, the values of the diagonal matrix $\matd S^{f}$ are high in the direction of the desired velocity (first component) but more compliant in the remaining directions. 
Moreover, the damping is set to ensure that when passing a narrow passage between two obstacles, where the normal vector cancels $\vect n(\vecs \xi) \approx \vect 0$, with additionally a low distance value $\Gamma(\vecs \xi) \approx 1$, the damping perpendicular to the velocity direction is high. We set:
\begin{equation}
  \begin{split}
  & \matd{S}^{f}_{d} =
  w^p s^{\mathrm{o}} + 
  \begin{cases}
   (1- w^p) s^f & d = 1 \\
   (1- w^p) s^s & d = 2, ..., N 
  \end{cases} \\
  & \text{with} \quad
   w^p = \min \left(1,  \| \vecs n(\vecs \xi) \|^2 +  \Delta \Gamma ^2 \right) \\
   & \text{and} \quad \Delta \Gamma = \max \left(\frac{\Gamma^{\mathrm{crit}} -\Gamma(\vecs \xi)}{\Gamma^{\mathrm{crit}} - 1}, 0\right)
  \end{split}
  \label{eq:velocity_damping}
\end{equation}

\subsection{Cluttered Environments}
In a cluttered environment, the normal vectors of the individual obstacles can be opposing. And hence using \eqref{eq:averaged_normal} and \eqref{eq:weight_function}, we get: 
\begin{equation}
    \| \vecs n(\vecs \xi) \| \rightarrow 0
    \quad \text{and} \quad
    \lim_{\Gamma (\vecs \xi) \rightarrow 1} w (\vecs \xi) = 0
\end{equation}
Additionally using \eqref{eq:damping_summation} and \eqref{eq:velocity_damping} we obtain:
\begin{equation}
    \lim_{\Gamma \rightarrow 1, \| n \| \rightarrow 0} \matd{D}(\vecs \xi) 
    = \matd{D}^S(\vecs \xi) + 0 
    =  \matd{I} s^o
\end{equation}

Hence, there is high damping in all directions to reject disturbances towards potential obstacles and ensure a collision-free motion even in cluttered environments.

\iflong
\subsection{Damping Parameter Design}
Higher values for the damping parameters $s^{(\cdot)}$ generally result in improved velocity following and disturbance repulsion, whereas lower values allow more compliant behavior.
The damping value in the direction of the obstacle is set high $s^{\mathrm{o}}$ to ensure obstacle avoidance even in the presence of high disturbances. 
Conversely, the damping in the direction of the velocity $s^{f}$ is set medium to high, as the system should follow the desired velocity $\vect f(\vecs \xi)$. However, it should remain compliant if needed.
Finally, to facilitate interaction, a low damping value $s^{c}$ should be chosen for all other directions.
This can be summarized as:
\begin{equation}
s^{\mathrm{o}} \geq s^{f} \gg s^{c} > 0
\end{equation}
\fi

\subsection{Passivity Analysis}
The stability analysis of the system gives information about the stability region of the proposed controller. We analyze passivity by observing the evolution of the kinetic energy of the system, given as:
\begin{equation}
	W(\vecs \xi, \vecs{\dot \xi}) = \frac{1}{2}  \dot{\vecs{\xi}}^T \matd{M}(\vecs \xi) \dot{\vecs{\xi}} \label{eq:energy_system}
\end{equation}

\begin{lemma} \label{lemma:passivity}
	Let us assume a robotic system as described in \eqref{eq:robot_dynamics} is controlled using \eqref{eq:control_command} using the damping matrix \editcolor{$\matd D(\vecs \xi, \dot{\vecs \xi})$} given in \eqref{eq:damping_summation} with damping values $s_d = 1, d = 1, ..., N$.
   The system is passive with respect to the input-output pair $\vecs \xi_e$, $\vecs{\dot \xi}$ when exceeding the desired velocity $\vect f(\vecs \xi)$ , i.e., $\dot{W} \leq \vecs{\dot \xi}^T \vecs \tau^e, \; \forall \vecs \xi \in \mathbb{R}^N: \| \vecs{\dot \xi} \| \geq \| \vect f(\vecs \xi) \|$ and the storage function being the kinetic energy $W \in \mathbb{R}_{>0}$ given in \eqref{eq:energy_system}
\end{lemma}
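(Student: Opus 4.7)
The plan is to directly verify the passivity inequality $\dot W \leq \dot{\vecs \xi}^T \vecs \tau_e$ by computing the time derivative of the kinetic-energy storage function along trajectories of the closed-loop system. First I would differentiate $W = \tfrac{1}{2}\dot{\vecs \xi}^T \matd M(\vecs \xi)\dot{\vecs \xi}$ to obtain
\begin{equation*}
\dot W = \dot{\vecs \xi}^T \matd M(\vecs \xi)\ddot{\vecs \xi} + \tfrac{1}{2}\dot{\vecs \xi}^T \dot{\matd M}(\vecs \xi)\dot{\vecs \xi},
\end{equation*}
substitute $\matd M \ddot{\vecs \xi}$ using the rigid-body dynamics \eqref{eq:robot_dynamics}, and then invoke the standard skew-symmetry property $\dot{\matd M} - 2\matd C = -(\dot{\matd M} - 2\matd C)^T$, which eliminates the Coriolis and inertia-derivative contributions. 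What remains is $\dot W = \dot{\vecs \xi}^T(\vecs \tau_c - \vect g) + \dot{\vecs \xi}^T \vecs \tau_e$.

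Next I would substitute the control law \eqref{eq:control_command}, so that $\vecs \tau_c - \vect g = \matd D(\vecs\xi,\dot{\vecs\xi})(\vect f(\vecs\xi) - \dot{\vecs\xi})$, yielding
\begin{equation*}
\dot W = \dot{\vecs \xi}^T \vecs \tau_e + \dot{\vecs \xi}^T \matd D\,\vect f(\vecs\xi) - \dot{\vecs \xi}^T \matd D\, \dot{\vecs \xi}.
\end{equation*}
The simplifying assumption $s_d = 1$ for all $d$ in \eqref{eq:damping_summation} is crucial here: since both $\matd D^f$ and $\matd D^{\mathrm{o}}$ are of the form $\matd Q \matd S \matd Q^{-1}$ with $\matd Q$ orthonormal, setting $\matd S = \matd I$ reduces each of them to the identity, and the convex combination with weight $w(\vecs\xi)$ is likewise the identity. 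Hence $\matd D = \matd I$ and the dissipation term collapses to the clean form $\|\dot{\vecs\xi}\|^2 - \dot{\vecs\xi}^T \vect f(\vecs\xi)$.

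Finally, I would bound the indefinite cross-term using the Cauchy--Schwarz inequality, $\dot{\vecs\xi}^T \vect f(\vecs\xi) \leq \|\dot{\vecs\xi}\|\,\|\vect f(\vecs\xi)\|$, so that
\begin{equation*}
\dot W - \dot{\vecs\xi}^T \vecs \tau_e \;\leq\; \|\dot{\vecs\xi}\|\bigl(\|\vect f(\vecs\xi)\| - \|\dot{\vecs\xi}\|\bigr).
\end{equation*}
Under the stated hypothesis $\|\dot{\vecs\xi}\| \geq \|\vect f(\vecs\xi)\|$, the right-hand side is non-positive, which delivers the desired passivity inequality $\dot W \leq \dot{\vecs\xi}^T \vecs\tau_e$. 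Integrating in time against the definition of passivity (Definition~\ref{def:passivity}), with $W(0) \geq 0$ as storage, closes the argument.

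The main conceptual obstacle is that the passivity bound is only active when $\|\dot{\vecs\xi}\| \geq \|\vect f(\vecs\xi)\|$; inside the ball $\|\dot{\vecs\xi}\| < \|\vect f(\vecs\xi)\|$ the controller is actively injecting energy to drive tracking, so a global passivity statement is impossible without further structure (for example, an energy tank, or bounding $\vect f$). The technical crux is therefore to recognise that the normalisation $s_d=1$ collapses the otherwise complicated anisotropic damping matrix to the identity, making Cauchy--Schwarz a tight tool; for non-unit $s_d$ one would instead have to argue with the minimum eigenvalue of $\matd D$ and a correspondingly rescaled velocity threshold.
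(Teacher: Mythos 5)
Your proposal is correct and follows essentially the same route as the paper: differentiate the kinetic energy, substitute the closed-loop dynamics, cancel the Coriolis and inertia-derivative terms via skew-symmetry of $\dot{\matd M} - 2\matd C$, and observe that $s_d = 1$ collapses $\matd D = \matd Q \matd S \matd Q^{-1}$ to the identity. The only (immaterial) difference is the final step, where you close the inequality $\dot{\vecs\xi}^T(\dot{\vecs\xi} - \vect f(\vecs\xi)) \geq 0$ with Cauchy--Schwarz, while the paper characterises the non-passive set geometrically via Thales' theorem as the ball of radius $\|\vect f(\vecs\xi)\|/2$ centred at $\vect f(\vecs\xi)/2$ --- a slightly sharper description of the same region, which the paper reuses later when discussing the elliptical non-passive region for anisotropic damping.
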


\iflong

\begin{proof}
The time derivative of storage function $W$ can be evaluated as:
\editcolor{
\begin{align}
	& \dot W(\vecs \xi, \vecs{\dot \xi}) =
	\vecs{\dot \xi}^T \matd M(\cdot) \vecs{\ddot \xi}  + \frac{1}{2} \vecs{\dot \xi}^T \dot{\matd M}(\cdot) \vecs{\dot \xi}  \nonumber \\
	&= \frac{1}{2} \vecs{\dot \xi}^T \left( \dot{\matd M}(\cdot) - 2 \matd C(\cdot) \right) \dot{\vecs \xi}  
	 \quad - \vecs{\dot \xi}^T \matd{D}(\cdot) \left(\vecs{\dot \xi} - \vect f(\vecs \xi) \right) + \vecs{\dot \xi}^T \vecs \tau^e \nonumber \\
  &= - \vecs{\dot \xi}^T \matd{D}(\cdot) \left( \vecs{\dot\xi} - \mathbf{f}(\vecs \xi) \right) + \vecs{\dot\xi}^T \vecs{\tau}^e
\end{align}
}
where the second order dynamics $\vecs{\ddot \xi}$ are evaluated according to the rigid body dynamics defined in \eqref{eq:robot_dynamics}. Furthermore, $\dot{\matd M} - 2 \matd C$ is skew-symmetric for any physical system \parencite{slotine1987adaptive}; hence, the corresponding summand is zero.

Let us investigate the region where the passivity holds. Since in the Lemma, we assumed all damping values to be equal to one, we have:
\editcolor{
\begin{equation}
	\matd{D}({\vecs \xi}, \dot{\vecs \xi}) = \matd{Q}(\cdot) \matd{S} (\cdot) \matd{Q}(\cdot)^{-1}= \matd{Q}(\cdot) \matd{I} \matd{Q}(\cdot)^{-1} = \matd{I}
\end{equation}
}
where $\matd{I} \in \mathbb{R}^{N \times N}$ is the identity matrix.

It follows that the system is passive with respect to the input, the external force $\tau^e$, and the output, the velocity $\dot {\vecs \xi}$, if: 
\begin{equation}
	\dot{\boldsymbol {\vecs \xi}}^T \matd{D}({\vecs \xi}) \left(\dot{\boldsymbol {\vecs \xi}} - \boldsymbol{f}(\boldsymbol {\vecs \xi}) \right) = 
    \dot{{\vecs \xi}} ^ T \Delta \vect{f}  \geq 0 
 \; , \quad
 \Delta \vect{f} = \dot{{\vecs \xi}} - \vect{f}({\vecs \xi})
 \label{eq:passivity_condition}
\end{equation}

On the border of this region, the two vectors $\Delta \vect{f}$ and $\dot{{\vecs \xi}}$ are orthogonal.
Hence, using Thale's theorem, this region can be interpreted as a circle in velocity-space with radius $\| \vect{f} ({\vecs \xi}) \| / 2$ and center $\vect{f}({\vecs \xi}) / 2$, see Figure~\ref{fig:passivity_analysis}.

\begin{figure}[bh]
	\centering
	\ifthesis
    \includegraphics[width=0.7\columnwidth]{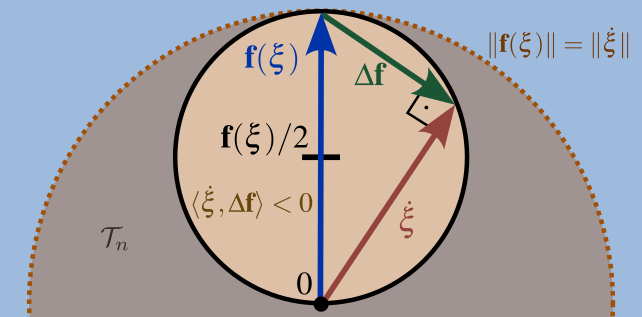}
	\else
    \includegraphics[width=.95\columnwidth]{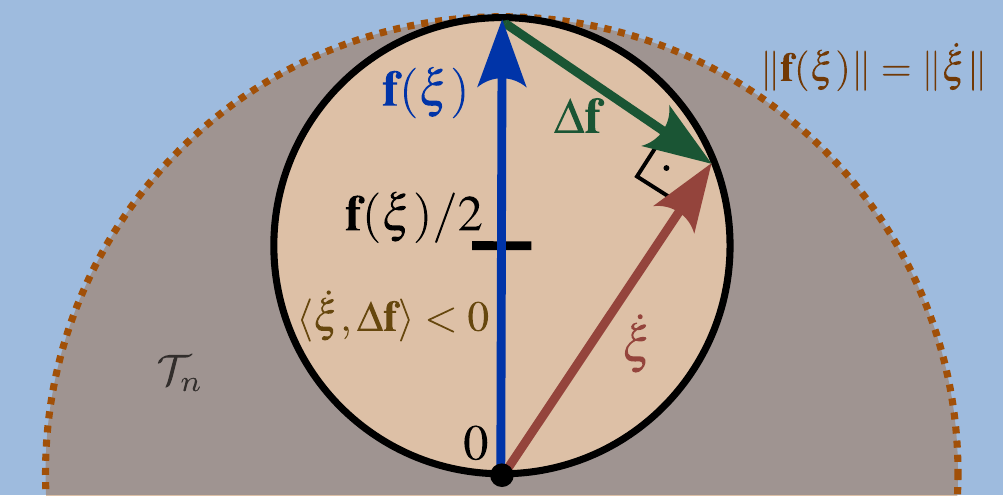}
	\fi
	\caption{Analyzing the system in velocity-space yields that the system is passive if it has a velocity $\dot{\vecs \xi}$ larger than the desired velocity $\vect f(\vecs \xi)$, i.e., outside the dashed circle.
    However, the system can be non-passive for small velocities when  $\dotprod{\dot{\vecs \xi}}{\Delta \vect f} < 0$ (yellow circle).}
	\label{fig:passivity_analysis}
\end{figure}

Moreover, the system is passive as long as the observed velocity $\dot{{\vecs \xi}}$ is outside the circular-red region, which is a subset of the region where the magnitude of the observed velocity is smaller than the desired velocity $\vect {f}({\vecs \xi})$, i.e.,
\begin{equation}
	\dot W({\vecs \xi}, \dot{{\vecs \xi}}) \leq \dot{{\vecs \xi}}^T \vecs \tau^e
 \quad \forall {\vecs \xi} : \| \dot{{\vecs \xi}} \| \geq\| \vect{f}({\vecs \xi}) \| 
\end{equation}

\end{proof}

As in the orange region, the system is not passive; the storage function $W$ could increase, and hence, the velocity $\dot {\vecs \xi}$ increases non-passively. This behavior is not unexpected, as the controller is designed to approach the desired dynamics $\vect{f}({\vecs \xi})$. Hence, as long as the desired velocity is not reached, the kinetic energy increases even with no force input $\vecs \tau^e$. However, as soon as the system velocity $\dot{\vecs \xi}$ exceeds the desired velocity $\vect f(\vecs \xi)$, the system behaves passively. We can use this to ensure the stability of the system:

\else
\editcolor{
\begin{proof}
See extended artice \parencite{passive2024huber_extended}. \hfill 
\end{proof}
}
\fi

\begin{theorem}  \label{theorem:passivity}
  Let $\vect f(\vecs \xi)$ is the desired velocity with bounded magnitude, i.e., $\| \vect f(\vecs \xi) \| < v^{\mathrm{max}}, \forall \vecs \xi \in \mathbb{R}^N$.
  The closed loop system \eqref{eq:robot_dynamics} using the controller from \eqref{eq:control_command} and the damping matrix $\matd D(\vecs \xi, \dot{\vecs \xi})$ given in \eqref{eq:damping_summation} is bounded-input, bounded-output (BIBO) stable with respect to the input disturbance force $\vect \tau^e$, and output the velocity $\dot{\vecs \xi}$ for all times $T = 0, ...,  \infty$.
\end{theorem}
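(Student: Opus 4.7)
The plan is to extend the energy-balance identity of Lemma~\ref{lemma:passivity} from the special case of unit damping to the general positive-definite damping matrix, and then convert the resulting ``passivity outside a ball'' into a BIBO bound via a Lyapunov-style trapping argument on the kinetic energy $W(\vecs \xi, \dot{\vecs \xi})$ from \eqref{eq:energy_system}. First I would recompute $\dot W$ along trajectories of \eqref{eq:robot_dynamics} under the control law \eqref{eq:control_command}, invoking the skew-symmetry of $\dot{\matd M} - 2\matd C$ exactly as in the proof of Lemma~\ref{lemma:passivity}, to obtain the state-dependent identity
\begin{equation}
\dot W = -\dot{\vecs \xi}^T \matd D(\vecs \xi, \dot{\vecs \xi}) \dot{\vecs \xi} + \dot{\vecs \xi}^T \matd D(\vecs \xi, \dot{\vecs \xi}) \vect f(\vecs \xi) + \dot{\vecs \xi}^T \vecs \tau^e
\end{equation}
which is valid for any positive-definite $\matd D$, not only $\matd D = \matd I$.

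Next I would exploit the structure of $\matd D$ given in \eqref{eq:damping_summation}. Because $\matd D^{f}$ and $\matd D^{\mathrm{o}}$ are built from orthonormal bases $\matd Q^{f}, \matd Q^{\mathrm{o}}$ with diagonal entries taken from the fixed set $\{s^f, s^{\mathrm{o}}, s^c\} \subset \mathbb{R}_{>0}$, and since $w(\vecs \xi) \in [0,1]$ turns \eqref{eq:damping_summation} into a convex combination, the eigenvalues of $\matd D$ are uniformly sandwiched: $s^c \leq \lambda_{\min}(\matd D) \leq \lambda_{\max}(\matd D) \leq s^{\mathrm{o}}$. Using this together with the hypothesis $\|\vect f(\vecs \xi)\| \leq v^{\mathrm{max}}$ and Cauchy--Schwarz, the energy balance gives the scalar bound
\begin{equation}
\dot W \leq -s^c \|\dot{\vecs \xi}\|^2 + \bigl(s^{\mathrm{o}} v^{\mathrm{max}} + \|\vecs \tau^e\|\bigr) \|\dot{\vecs \xi}\|.
\end{equation}
Assuming a bounded disturbance $\|\vecs \tau^e(t)\| \leq \tau^e_{\max}$, the right-hand side is strictly negative whenever $\|\dot{\vecs \xi}\| > v^* := (s^{\mathrm{o}} v^{\mathrm{max}} + \tau^e_{\max})/s^c$. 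Combined with the standard manipulator inertia bounds $\tfrac{1}{2}\lambda_{\min}(\matd M)\|\dot{\vecs \xi}\|^2 \leq W \leq \tfrac{1}{2}\lambda_{\max}(\matd M)\|\dot{\vecs \xi}\|^2$, this forces every trajectory into the sublevel set $\{W \leq \tfrac{1}{2}\lambda_{\max}(\matd M)(v^*)^2\}$ in finite time and traps it there, which yields a uniform bound on $\|\dot{\vecs \xi}(t)\|$ in terms of $\|\dot{\vecs \xi}(0)\|$ and $\tau^e_{\max}$, i.e.\ BIBO stability.

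The main obstacle I expect is handling the discontinuity in $\matd S_1^{\mathrm{o}}$ introduced by \eqref{eq:leaving_compliance}, since $\matd D$ is not a continuous function of $(\vecs \xi, \dot{\vecs \xi})$. I would address this by noting the observation already made in Section~\ref{sec:damping_only_toward}: at the switching surface $(\vect f(\vecs \xi) - \dot{\vecs \xi})^T \vecs n(\vecs \xi) = 0$ the jump in $\matd S_1^{\mathrm{o}}$ multiplies a vanishing component of $(\vect f - \dot{\vecs \xi})$, so the control force $\vecs \tau^c$ stays continuous and the energy balance above holds almost everywhere along the trajectory in the Filippov sense. A secondary subtlety is that uniform positive-definiteness of $\matd D$ must persist through the convex combination and the rank drop of $\matd Q^{\mathrm{o}}$ when $\|\vect n(\vecs \xi)\| \to 0$; but in that regime $w(\vecs \xi) \to 0$ by \eqref{eq:weight_function}, so $\matd D \to \matd D^{f}$, whose uniform positive-definiteness is guaranteed by \eqref{eq:velocity_damping} and $s^c, s^f, s^{\mathrm{o}} > 0$. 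Once these two continuity/definiteness checks are complete, the Lyapunov trapping argument above closes the proof.
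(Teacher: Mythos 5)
Your argument is correct, and it takes a genuinely different route from the paper's. The paper proves the theorem by bounding the time-integral $\int_0^T \|\dot{\vecs\xi}\|\,dt$, splitting time into the set $\mathcal{T}_n$ where $\|\dot{\vecs\xi}\| \leq \|\vect f(\vecs\xi)\|$ (on which the speed is bounded by $v^{\mathrm{max}}$) and its complement (on which passivity is invoked to produce a finite constant $K_p$), and it handles non-uniform damping by the coordinate change $\bar{\vect v}=\sqrt{\matd S}\,\matd Q^{-1}\dot{\vecs\xi}$, which inflates the bound by the ratio of the extreme eigenvalues of $\matd D$. You skip the change of coordinates entirely: you keep the exact energy balance $\dot W=-\dot{\vecs\xi}^T\matd D(\dot{\vecs\xi}-\vect f)+\dot{\vecs\xi}^T\vecs\tau^e$ (which, as you correctly observe, never used the assumption $s_d=1$ from Lemma~\ref{lemma:passivity}), sandwich the eigenvalues of the convex combination \eqref{eq:damping_summation} uniformly via Weyl's inequality, and run a standard ultimate-boundedness argument on $W$ with the inertia bounds. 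This buys you an explicit sup-norm bound on the output, $\|\dot{\vecs\xi}(t)\|$ bounded in terms of $\|\dot{\vecs\xi}(0)\|$ and $v^*=(s^{\mathrm{o}}v^{\mathrm{max}}+\tau^e_{\max})/s^c$, which is the literal content of BIBO stability; by contrast, the paper's bound $K_p+v^{\mathrm{max}}T_n$ on the integral grows with the time $T_n$ spent in the non-passive region and is not uniform in $T$, so your version is arguably the more rigorous closing of the argument. Your two auxiliary checks, continuity of $\vecs\tau_c$ across the switching surface of \eqref{eq:leaving_compliance} and uniform positive-definiteness as $\|\vect n(\vecs\xi)\|\to 0$, address points the paper leaves implicit and are resolved exactly as you describe.
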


\iflong
\begin{proof}
To ensure BIBO stability, let us analyze the integral of the impulse of the response for the external force $\vecs \tau^e$: 
\begin{equation}
	\begin{split}
	  \int_{0}^{T} \left\| \dot{\vecs \xi} \right\| \; dt 
	  & = \int_{t \notin \mathcal{T}_n} \left\| \dot {\vecs \xi} \right\|  \, dt + \int_{t \in  \mathcal{T}_n} \left\| \dot {\vecs \xi} \right\| \;  dt \\ 
	  & \leq K_p + v^{\mathrm{max}} T_n
\end{split}
\label{eq:bibo_velocity}
\end{equation}
where $\mathcal{T}_n$ denotes the set of time instances where the system is not shown to be passive (Fig.~\ref{fig:passivity_analysis}), specifically $\| \dot{\vecs \xi} \| \leq \| \vecs f (\vecs \xi) \|$, and $T_n \in \mathbb{R}_{\geq 0}$ is the total duration which the system spends in this region. Additionally, as ar result from passivity in the inner region, there exists a finite valued constant $K_p \in \mathbb{R}_{\geq 0}$ which marks the upper bound of the system.
Hence, the impulse response is bounded, and the system is BIBO stable.

However, from \eqref{eq:damping_summation}, we know that a general damping matrix $\matd{S}(\vecs \xi, \dot{\vecs \xi})$ can have non-uniform diagonal values. This is analyzed by introducing the coordinate transfers:
\begin{equation}
	\vecs{\bar{v}} = \sqrt{\matd{S}(\cdot)} \matd{Q}(\cdot)^{-1} \dot{{\vecs \xi}}
	\;\; \text{and} \;\;
	\bar{\Delta \vect f} = \sqrt{\matd{S}(\cdot)} \matd{Q}(\cdot)^{-1} \Delta \vect{f}
\end{equation}
where the square root of the diagonal matrix $\matd{S}({\vecs \xi})$ is taken element-wise.
The transfer is then used to rewrite \eqref{eq:passivity_condition} as:
\begin{equation}
	\dot{\vecs \xi}^T \matd{D}({\vecs \xi}, \dot{\vecs \xi}) \Delta \vect{f} = \vecs{\dot \xi}^T \matd{Q}(\cdot) \matd{S}(\cdot) \matd{Q}(\cdot)^{-1} \Delta \vect{f} = \vecs{\bar v}^T \bar{\Delta \vect f}
\end{equation}

Hence, the BIBO analysis of \eqref{eq:bibo_velocity} applied to the transformed system results as:
\begin{equation}
\begin{split}
	  & \int_{0}^{T} \left\| \vecs{\bar v} \right\| \; dt   
	   = \int_{t \notin \bar{\mathcal{T}}_n} \left\| \vecs{\bar v} \right\|  \, dt + \int_{t \in  \bar{\mathcal{T}}_n} \left\| \vecs{\bar v} \right\| \;  dt  \\ 
   & < K_p + v^{\mathrm{max}} \bar T_n 
   {\max{\Bigl(\text{eig}\bigl(\mathcal{D} \bigr) \Bigr)}} 
   / {\min{\Bigl(\text{eig}\bigl(\mathcal{D} \bigr) \Bigr)}}
\end{split}
\end{equation}
where $\bar{\mathcal{T}}_n$ denotes the region where the transformed system $\vecs{\bar v}$ is not shown to be passive, i.e. $\| \vecs{\bar v} \| \leq \| \bar{\Delta \vect f} \|$, and $\bar T_n \in \mathbb{R}_{\geq 0}$ the corresponding time. Additionally, $\min{(\text{eig}(\mathcal{D}))}$ and $\max{(\text{eig}(\mathcal{D}))}$ are the smallest and largest eigenvalue of the damping matrix $\matd{D}$ respectively.

Hence, since the transformed system with velocity $\vect {\bar v}$ is BIBO stable, the original system with velocity $\dot{\vecs \xi}$ is BIBO stable, too, as long as it is a continuous, finite transform. 
\end{proof}
\else
\editcolor{
\begin{proof} See extended article \parencite{passive2024huber_extended}. \end{proof}
}
\fi
For an orthogonal decomposition matrix $\matd{Q}(\boldsymbol{{\vecs \xi}}, \dot{\vecs \xi})$, the region of non-passivity is an ellipse where the direction of the axes points along column vectors of $\matd{Q}$, and the corresponding axes lengths are the diagonal elements of $\| \mathbf{f}({\vecs \xi})^T \sqrt{\matd{S}(\cdot)}\| / 2 \sqrt{\matd{S}(\cdot)}^{(-1)}$. 
If the ratio of the first damping value to the other axis $i \geq 2$ is large, i.e., $s_1 / s_i \gg 1$, it can lead to non-passivity even though the velocity $\dot{\vecs \xi}$ is already larger (but not pointing in the correct direction) than the desired velocity. However, the non-passive region is still \iflong bounded (Fig.\ref{fig:passivity_analysis_skew}). \else bounded. \fi
This proof holds for any basis $\matd{Q}$ which is not singular. However, the controller must be carefully chosen to ensure that the speed up is limited when the basis is close to singular, for example, by limiting the relative difference of the stretching vectors. Furthermore, as stable behavior is ensured for a general shape of a damping matrix $\matd{D}(\vecs \xi)$, the global stability proof extends to any positive definite damping matrix matrices.

Since the damping matrix $\matd D(\vecs \xi)$ changes dynamically, a change in the environment can move the system outside of the passive region. However, a finite maximum velocity always exists, at which the system is ensured to be passive.

\iflong
\begin{figure}[htbp]
    \centering
    \begin{subfigure}{0.49\columnwidth}
      \centerline{\includegraphics[width=\textwidth]{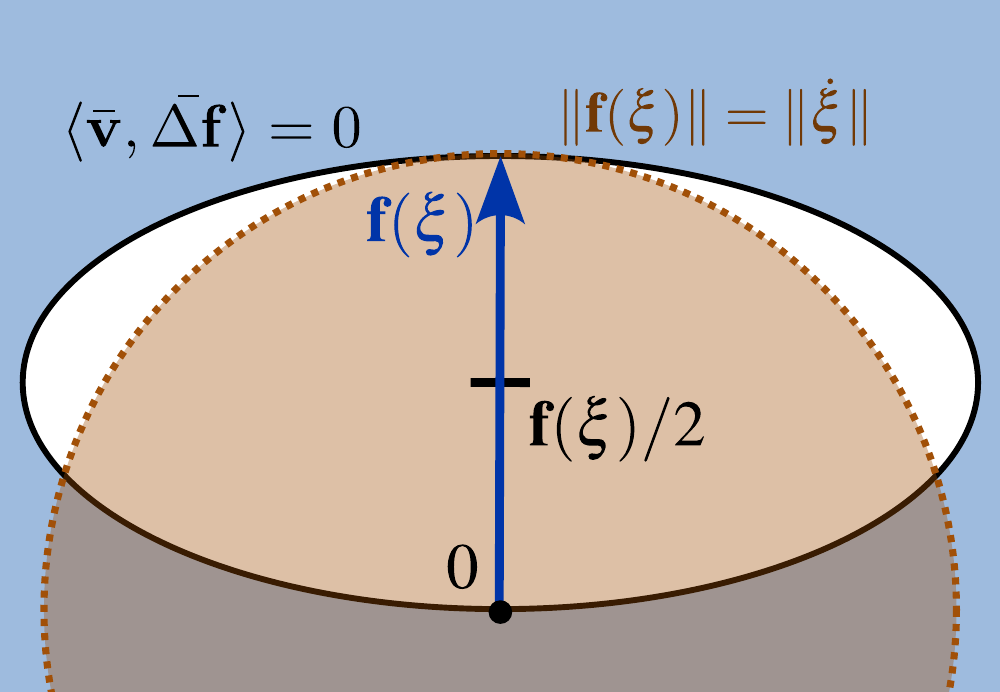}}
	  \caption{$\matd{S}_1^f > \matd{S}_2^f$, $w \approx 0$}
	  \label{fig:passivity_analysis_wide}
    \end{subfigure}\hfill%
    \begin{subfigure}{0.49\columnwidth}
    \includegraphics[width=\textwidth]{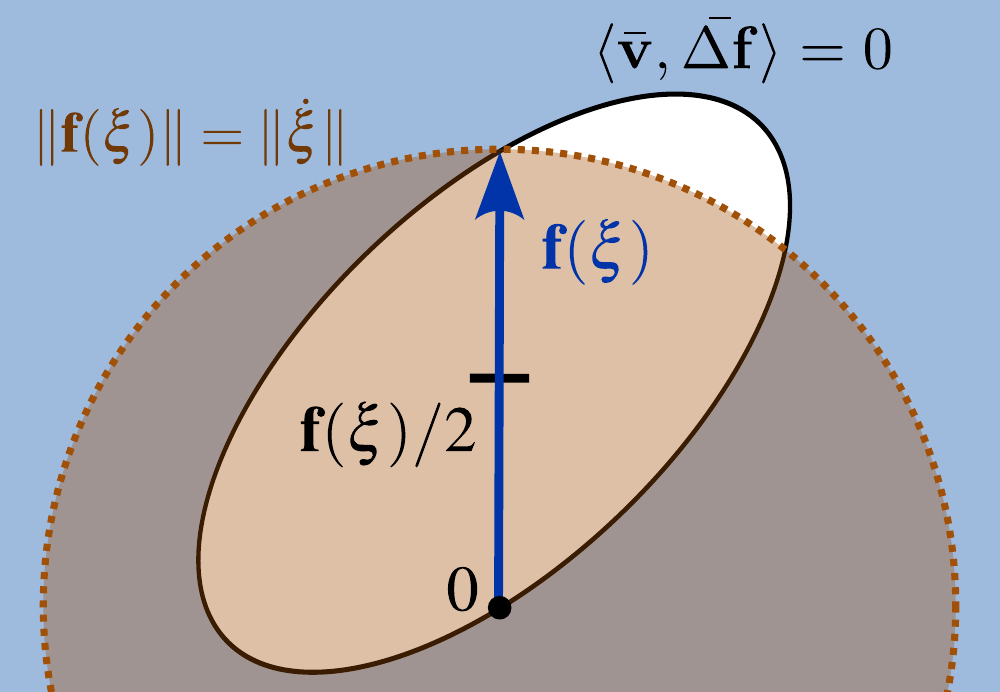}
	\caption{$\dotprod{\vect f(\vecs \xi)}{\vect q_1} \neq \| \vect f(\vecs \xi)\| \; \| \vect q_2\| $ }
      \label{fig:passivity_analysis_skew}
    \end{subfigure}
	\caption{
		The stability is ensured even if the controller can temporarily accelerate the system to reach a velocity $\dot{\vecs \xi}$, which is faster than the desired velocity $\vect f(\vecs \xi)$ (white region).
		This happens when the eigenvalues of the damping matrix $\matd{D}(\vecs \xi)$ are not uniform (a) or the stretching vectors $\vect q_1$ and $\vect q_2$ are not orthogonal (b). 
	In both cases, the region of non-passivity is elliptical (black circle).
}
	\label{fig:passivity_analysis_varied}
\end{figure}
\fi


\section{Collision Avoidance} \label{sec:collision_avoidance}

The principal goal of the controller introduced in the previous section is its ability to ensure collision avoidance in the presence of external disturbances.
However, the control force $\vect \tau^c$ proposed in \eqref{eq:control_command} does not explicitly consider external forces. Yet, it is designed to correct the agent's velocity $\dot{\vecs \xi}$ if it deviates from the desired velocity $\ddot{\vecs \xi}$.\footnote{Note that for a discrete-time (digital) controller, this results in a delay.}

Since interaction with the environment results in a force on the system, often over a short period $\Delta t \ll 1$.  Hence, we can define the velocity after impact $\vect v^I$ as:
\begin{align}
	\vect v^I
	  \approx \int_{t^I}^{t^I + \Delta t} \ddot{\vecs \xi} dt  
	  \approx \int_{t^I}^{t^I + \Delta t} \matd{M}^{-1}(\vecs \xi)  \vecs \tau_e \, dt  
	  \label{eq:impact_velocity}
\end{align}
using the controller from \eqref{eq:robot_dynamics}, and under absence of the control force $\vecs \tau_c$ during this short timeframe. Additionally, $\{\vect \xi \}_{t^I}$ is the velocity before the impact.
 


Furthermore, let us consider a desired velocity $\vect f(\vecs \xi)$, which is a constant, collision-free vector field parallel to the surface of a flat obstacle surface (see Fig.~\ref{fig:disturbance_with_parallel_velocity}):
\begin{equation}
	\dotprod{\vect f(\vecs \xi)}{\vecs n(\vecs \xi)} = 0
	 \qquad
\vect f(\vecs \xi) = \text{const.}
\, , \;
\vect n(\vecs \xi) = \text{const.}
\label{eq:parallel_velocity}
\end{equation}
where $\vecs n(\vecs \xi)$ is the surface's normal vector, and the agent moves in a straight line, hence we can neglect the Coriolis effect. For disturbances in such environments, we show that our approach can ensure the impenetrability of the obstacle up to an upper bound on the magnitude of the disturbance:

\begin{lemma} \label{lemma:damping_collision_avoidance}
	Consider a point-mass agent with mass $m \in \mathbb{R}_{>0}$, whose motion evolves according to the rigid body dynamics given in \eqref{eq:robot_dynamics} controlled by \eqref{eq:control_command}, with constant damping matrix $\matd{D}$ from \eqref{eq:damping_summation}. The agent tracks a constant reference velocity ${\mathrm{f}}$, whose vector field moves parallel to a flat obstacle as given in \eqref{eq:parallel_velocity}. Any motion path initiated in free space will remain collision-free for all times, i.e., $\Gamma( \{\vecs \xi_t\}) \geq 1$ with $t \geq 0$ if the impact velocity $v^I$ as given in \eqref{eq:impact_velocity} at time $t=0$ is limited by $\| \vect v^I\| < s^{\mathrm{f}} \| \vecs \xi - \vecs \xi^b \| / m$, with respect to the closest surface point $\vecs \xi^b \in \mathbb{R}^N$.
\end{lemma}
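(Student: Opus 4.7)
The plan is to reduce the $N$-dimensional post-impact dynamics to a single scalar ODE along the obstacle normal $\vect n$, which---by the flat-wall and parallel-velocity hypothesis \eqref{eq:parallel_velocity}---is the only direction along which a collision can occur. I would introduce $d(t) := \vect n^T (\vecs \xi(t) - \vecs \xi^b)$ so that $d(0) = \|\vecs \xi - \vecs \xi^b\|$, since for a flat wall the closest boundary point lies along $-\vect n$ from $\vecs \xi$. The goal is then to prove $d(t) \geq 0$ for all $t \geq 0$, which in this flat-obstacle setting is equivalent to $\Gamma(\vecs \xi_t) \geq 1$.

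Next I would derive the scalar ODE. Immediately after the brief impact window, $\vecs \tau_e \equiv 0$; gravity is cancelled by the compensation term in \eqref{eq:control_command}, and the Coriolis contribution vanishes by the straight-line assumption. The rigid-body equation \eqref{eq:robot_dynamics} therefore reduces to $m \ddot{\vecs \xi} = \matd D (\vect f - \dot{\vecs \xi})$. Because $\matd D = \matd Q \matd S \matd Q^T$ is symmetric with $\vect n$ appearing as an eigenvector---it is the first basis vector of $\matd D^{\mathrm{o}}$ by \eqref{eq:first_obstacle_basis}, and lies in the eigenspace of $\matd D^{f}$ orthogonal to $\vect f$---projecting on $\vect n$ and using $\vect n^T \vect f = 0$ yields the decoupled linear equation $m \ddot d = -s^n \dot d$, where $s^n$ denotes the eigenvalue of $\matd D$ in the normal direction.

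Integrating in closed form gives $\dot d(t) = \dot d(0) \exp(-s^n t/m)$, from which $d$ is monotone and the worst-case total normal excursion is $|d(\infty) - d(0)| = m |\dot d(0)|/s^n$. The worst initial condition is $\dot d(0) = -\|\vect v^I\|$, i.e.\ the full impact velocity directed into the obstacle; then $\inf_{t \geq 0} d(t) = d(0) - m \|\vect v^I\|/s^n$, and requiring this infimum to remain positive gives the sharp bound $\|\vect v^I\| < s^n \|\vecs \xi - \vecs \xi^b\|/m$. I would close the proof by reading off from \eqref{eq:damping_summation}, \eqref{eq:obstacle_damping_values} and \eqref{eq:velocity_damping} that, under \eqref{eq:parallel_velocity} so $p = \dotprod{\vecs q_1^{\mathrm{o}}}{\vecs q_1^{f}} = 0$, the eigenvalue $s^n$ is a convex combination of values bounded below by $s^f$ via the design ordering $s^{\mathrm{o}} \geq s^f$, so the stated hypothesis $\|\vect v^I\| < s^f \|\vecs \xi - \vecs \xi^b\|/m$ is sufficient.

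The main non-routine step is this final eigenvalue bound: identifying $\vect n$ as an eigenvector of every summand in \eqref{eq:damping_summation} under \eqref{eq:parallel_velocity}, then tracking the associated weights through \eqref{eq:weight_function}, \eqref{eq:obstacle_damping_values} and \eqref{eq:velocity_damping} to conclude $s^n \geq s^f$. The scalar ODE integration and the worst-case choice of $\dot d(0)$ are otherwise routine, and the slack between $s^n$ and $s^f$ is what makes the stated collision-avoidance bound conservative rather than tight.
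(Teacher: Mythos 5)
Your proof is correct and follows essentially the same route as the paper's: both reduce the dynamics to the component along $\vect n$, use that $\vect n$ is an eigenvector of the constant damping matrix $\matd D$ (with normal eigenvalue bounded below by $s^{\mathrm{f}}$) so the normal motion decouples, and bound the total normal excursion by $m \| \vect v^I \| / s^{\mathrm{f}}$. The only difference is presentational: you integrate the scalar ODE in closed form and make the eigenvalue bookkeeping explicit, whereas the paper writes the same displacement bound as an integral inequality and justifies the collision-free conclusion via the Bony--Brezis surface condition.
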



\iflong
\begin{proof}
According to the Bony-Bezis theorem \parencite{bony1969principe}, the trajectories are collision-free if there is zero velocity towards the obstacle on the surface, i.e.,
\begin{equation}
	\left| \vect n(\vecs \xi)^T \, \{ \vecs{\dot \xi} \}_{t} \right| = 0 
	\quad \forall \, \Gamma(\vecs \xi) = 1
	\label{eq:bezis_theorem}
\end{equation}

We want to find the time when the agent stops moving towards the obstacle, enabling us to evaluate the distance traveled as a function of the velocity after disturbance $\vect v^I$. 
Let us assume without loss of generality that the disturbance occurs at time $t=0$. Hence, the velocity at time $T$ can be computed as:
\begin{equation}
\begin{split}
	\{ \vecs{\dot \xi} \}_{T} 
	& = \int_0^T \vecs{\ddot \xi} \, dt = \int_0^T \matd{M}^{-1}(\cdot) \matd{D}(\cdot) \left( \vecs{\dot \xi} - \vect f(\vecs \xi ) \right) \, dt \\
	& = \matd{M}^{-1}(\cdot) \int \left( (1 - w) \matd{D}^f(\cdot) + w \matd{D}^{o}(\cdot) \right) \left( \vecs{\dot \xi} - \vect f (\vecs \xi ) \right) dt \\
	\end{split}
\label{eq:velocity_evolution}
\end{equation}

Furthermore, since the vector field, $\vect f(\vecs \xi)$ is constant and the obstacle's surface does not have any curvature, it follows from \eqref{eq:damping_summation} that the damping matrices  $\matd{D}^o$ and $\matd{D}^f$ are constant.
Moreover, by design of the damping matrices, from \eqref{eq:first_obstacle_basis} it follow that $\matd{D}^o(\vecs \xi, \dot{\vecs \xi}) \vect n(\vecs \xi) = s^o \vect n(\vecs \xi)$, and from  \eqref{eq:velocity_unit_vector} that $\matd{D}^f(\vecs \xi, \dot{\vecs \xi}) \vect n(\vecs \xi) = s^f \vect n(\vecs \xi)$. 

From \eqref{eq:bezis_theorem} follows that it is sufficient to observe the normal component of the vectors only. Thus, in the rest of this paragraph, the components along the normal are denoted by scalar values, e.g. $\dot \xi = \dotprod{\dot{\vecs \xi}}{\vecs n(\vecs \xi)} \vecs n(\vecs \xi)$.
Hence, we get:
\begin{align}
	0 & = \left| \vect n^T \, \{ \vecs{\dot \xi} \}_{t} \right| 
	  = \frac 1 m \int_0^T ( (1-w) \vect n^T \matd{D}^f \vect n + w s^{o} ) \, {\dot \xi} \, dt \nonumber \\
	   & < \frac 1 m \int_0^T  s^{f}  \, {\dot \xi} \, dt 
	   = \frac{s^f}{m} (\xi - \{ \xi \}_0 ) + v^I \, dt 
\end{align}
where $m = \max{\left(\text{eig}(\matd{M})\right)}$, with the maximum displacement as:
\begin{equation}
	\| \{\xi \}_0 - {\xi} \| \leq \| v^I \| {m} / {s^{\mathrm{f}}} 
\end{equation}
\end{proof}
\else
\editcolor{
\begin{proof}
See extended artice \parencite{passive2024huber_extended}. \hfill 
\end{proof}
}
\fi
Lemma~\ref{lemma:damping_collision_avoidance} assumes constant velocity field $\vect f(\vecs \xi)$ and flat obstacle surface. This is an appropriate assumption for large velocities after disturbances towards the obstacle, i.e., $\| \vect v^I \| \gg \| \vect f(\vecs \xi) \|$ and starting close to the surface. Since the distance traveled has to be flat to avoid collision, the vectorfield is likely to show small changes, and the surface has little deviation. 

\begin{figure}[htb]
\centering
 \iflong
  \centerline{\includegraphics[width=0.99\columnwidth]{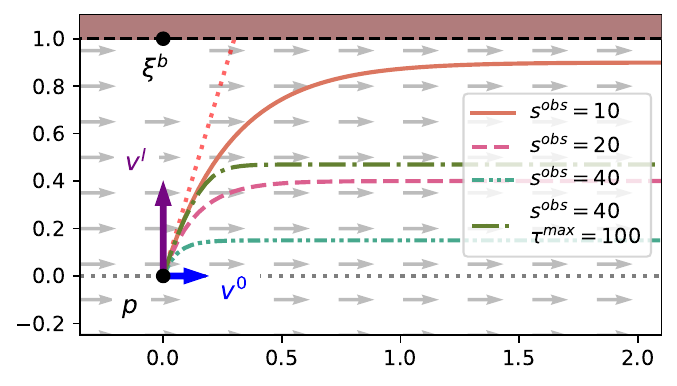}}
  \else
  \centerline{\includegraphics[width=0.99\columnwidth]{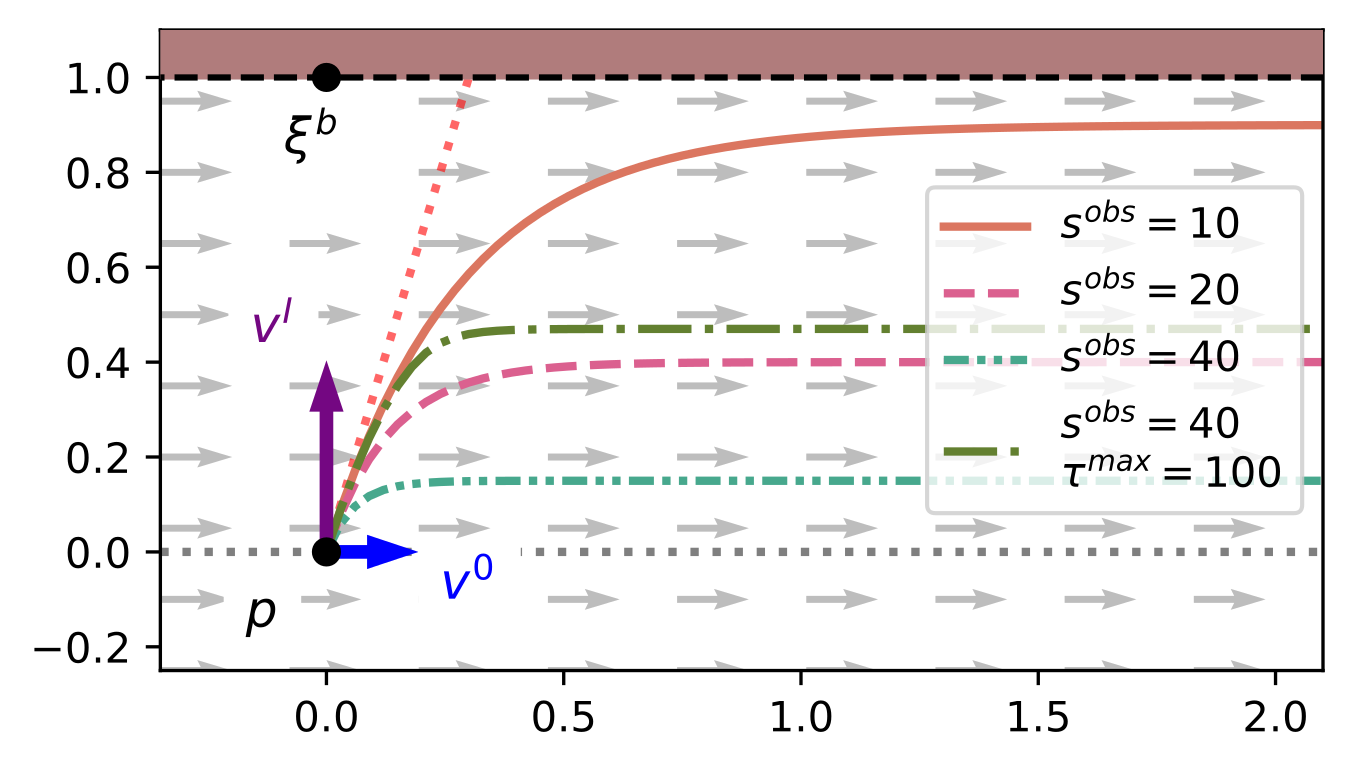}}
  \fi
  \caption{A disturbance occurs of a point-agent at position $\vect p^0$ with velocity after the impact of $\{ \dot{\vecs \xi} \}_0 = \vect v^0 + \vect v^I$. A high damping in the direction of the obstacle in the presence of a constant velocity field (gray) ensures collision avoidance. Whereas different damping values $s^{\mathrm{o}}$ and optionally a maximum repulsion force $\vecs \tau^{\mathrm{max}}$ lead to different trajectories.}
  \label{fig:disturbance_with_parallel_velocity}
\end{figure}
    
Nevertheless, there is no guarantee against drifting into obstacles in the presence of highly curved surfaces and velocity fields. \iflong This is further discussed during the experiments in Section~\ref{sec:position_noise}, where the increased damping towards the obstacle significantly reduces collision in such scenarios. \fi However, designing a repulsive field as proposed in \parencite{huber2023avoidance} can ensure collision avoidance in such scenarios.

\iflong
All robotic systems have a maximum force that they can exert on the environment based on the motors, geometry, and state, $\tau_c^{\mathrm{max}} \in \mathbb{R}_{>0}$. Such a limiting force decreases the impact velocity a controller can handle while ensuring collision avoidance, as shown in Fig.~\ref{fig:disturbance_with_parallel_velocity}. Nevertheless, a maximum control force can be interpreted as adapting damping; hence, the passivity from Theorem~\ref{theorem:passivity} holds.
\fi

\section{Evaluation}  \label{sec:evaluation} 
The proposed obstacle-aware controller\footnote{Source code: \url{https://github.com/hubernikus/obstacle_aware_damping}} is compared to a baseline, the velocity preserving, passive controller \parencite{kronander2015passive}.

\iflong
\subsection{Qualitative Repulse Rejection} \label{sec:qual_comp}
A qualitative analysis of the proposed controller's behavior in three scenarios, as depicted in Figure~\ref{fig:obstacle_aware_damping_comparison}. In each scenario, the agent approaches multiple obstacles from distinct starting positions, and a disturbance (indicated by an arrow) is applied. The simulation time step is $\Delta t = 0.01 s$ seconds, and the agent's mass matrix is $\matd{M} = \matr{I} kg$. The controller is implemented using the following damping values:
$s^{\mathrm{obs}}=~$\qty{200}{s^{-1}},
$s^{\mathrm{f}}=$~\qty{100}{s^{-1}}, and
$s^{\mathrm{c}}=$~\qty{20}{s^{-1}}.

\begin{figure}[htbp]
  \centering
  \ifthesis
  \centerline{\includegraphics[width=0.8\columnwidth]{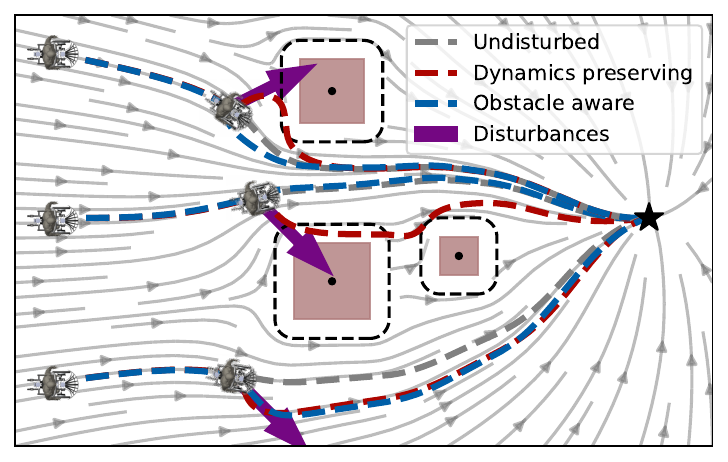}}
  \else
  \centerline{\includegraphics[width=0.95\columnwidth]{figures/multi_obstacle_with_damping.pdf}}
  \fi
  \caption{
  The desired velocity $\vect f(\vecs \xi)$ is represented in gray and serves as the input for the force controller. The mobile robot, initially positioned at three different locations, navigates safely towards the attractor (black star) even when confronted with external disturbances (purple arrows) while employing the obstacle-aware controller (blue trajectories). In contrast, the baseline controller (red) results in collisions when disturbances occur close to the robot.
  }
  \label{fig:obstacle_aware_damping_comparison}
\end{figure}

In the top trajectory, the robot encounters a stand-alone obstacle and experiences a disturbance that pushes it toward the obstacle. With the obstacle-aware controller, the robot avoids collision and continues moving towards the attractor. In contrast, the baseline controller, which prioritizes velocity conservation, fails to respond effectively and is pushed into the obstacle, resulting in a colliding trajectory. 
	During the middle trajectory, a disturbance occurs when the agent is positioned between two obstacles. The obstacle-aware controller utilizes the normal vector $\vect n(\vecs \xi)$, as described in Section~\ref{sec:obstacle_repulsion}. However, due to its construction, the magnitude of $\vect n(\vecs \xi)$ diminishes in narrow passages (as seen in \eqref{eq:averaged_normal}), leading to increased damping in all directions, as described in \eqref{eq:leaving_compliance}. As a result, the agent successfully avoids the disturbance using the obstacle-aware controller while the baseline controller follows a colliding trajectory.
In the bottom trajectory, the repulsive force points away from the obstacle. The obstacle-aware and the velocity-preserving controllers produce nearly identical trajectories due to equal compliance when moving away from an obstacle as defined in \eqref{eq:leaving_compliance}.
We observe the selective damping of disturbances towards the obstacle. The top and middle disturbances are highly damped, while the bottom disturbances are not. This feature imparts a natural behavior of moving away from obstacles.
\fi


\subsection{Noise Analysis}
\editcolor{
In practical robotic applications, controllers are inevitably exposed to noise and unexpected disturbances arising from sensor inaccuracies, environmental factors, or other external perturbations. A high-quality controller can effectively reject such disturbances while simultaneously achieving the control objectives, such as collision avoidance and trajectory tracking.
This section investigates the impact of noise disturbances on a simulated agent with an identity mass matrix $\matd M = \matr I kg$. The discrete time step is set to $\Delta t = $~\qty{0.2}{s}. Additionally, the damping values are configured as follows: 
$s^{\mathrm{o}}=$~\qty{50}{s^{-1}},
$s^{\mathrm{f}}=$~\qty{40}{s^{-1]}}, and
$s^{\mathrm{c}}=$~\qty{5}{s^{-1}}.
The robot's objective is to follow a linear velocity field of the form $\vect f(\vecs \xi) = (\vecs \xi^a - \vecs \xi)$ with a velocity cap of \qty{1}{m/s}.
To evaluate the controller's performance, a comparative analysis is conducted by assessing the minimal distance to the surface along the trajectory, denoted as $ \min_t \| \vecs \xi_t - \vecs \xi^b \| $, with the closest boundary point $\vecs \xi_b$ described in \eqref{eq:distance_function}. 
}

\begin{figure}[htbp]
    \centering
    \begin{subfigure}{\columnwidth}
	\centering
	\ifthesis
    \includegraphics[width=0.7\textwidth]{figures/trajectory_velocity_noise}
	\else
	  \iflong
      \includegraphics[width=\textwidth]{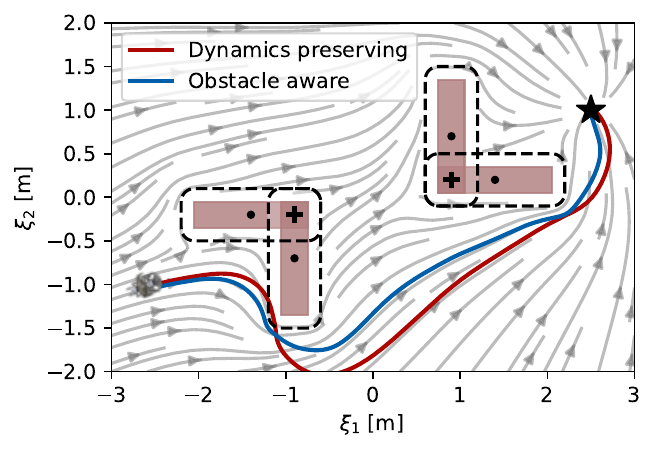}
	  \else
      \includegraphics[width=\textwidth]{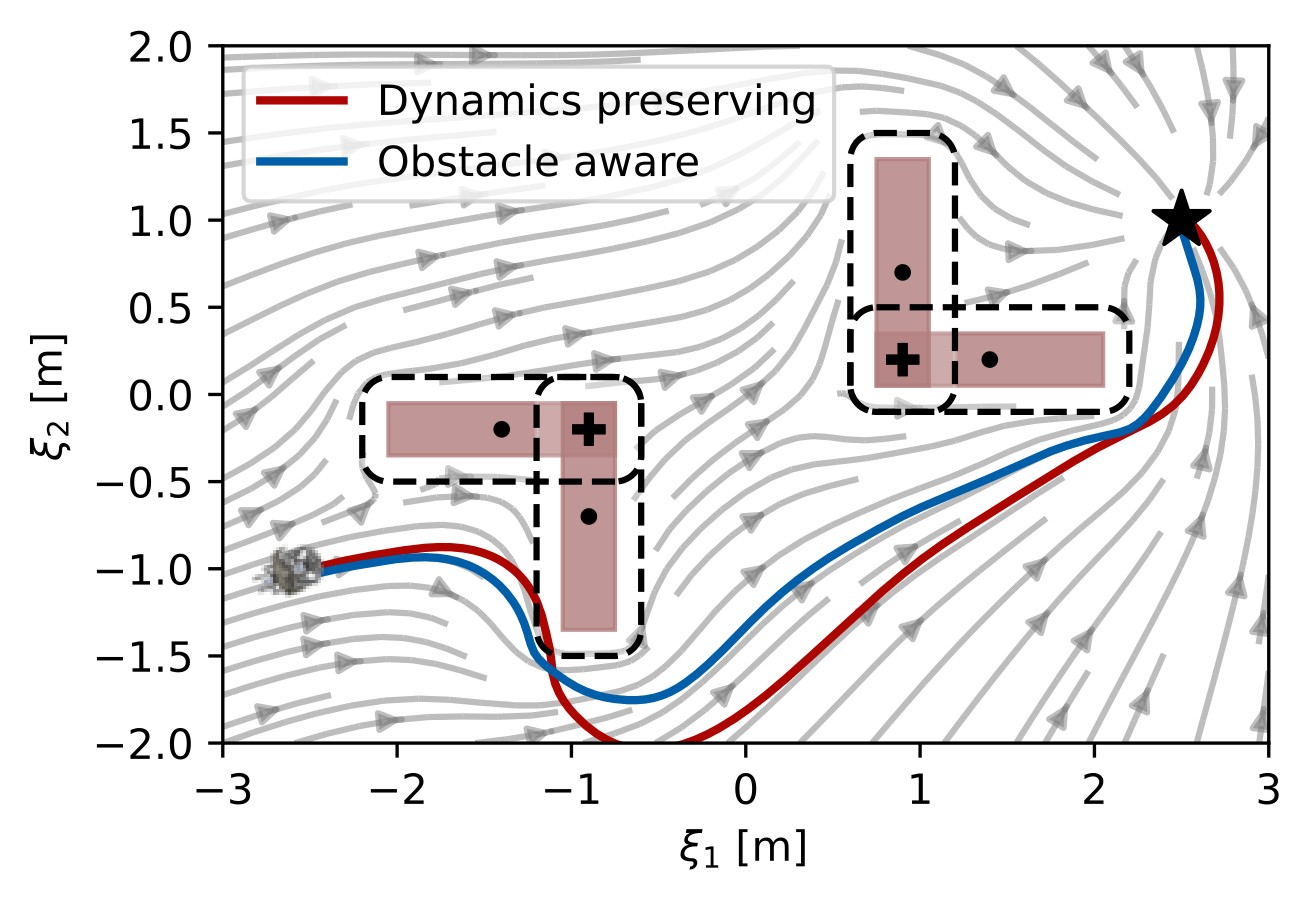}
	  \fi
	\fi
    \caption{Trajectories with a standard deviation of the velocity-noise of 1.0 m/s.}
    \label{fig:trajectory_velocity_noise}
    \end{subfigure}
    \begin{subfigure}{\columnwidth}
	\centering
	\ifthesis
    \includegraphics[width=0.7\textwidth]{figures/comparison_velocity_noise}
	\else
	  \iflong
	  \includegraphics[width=\textwidth]{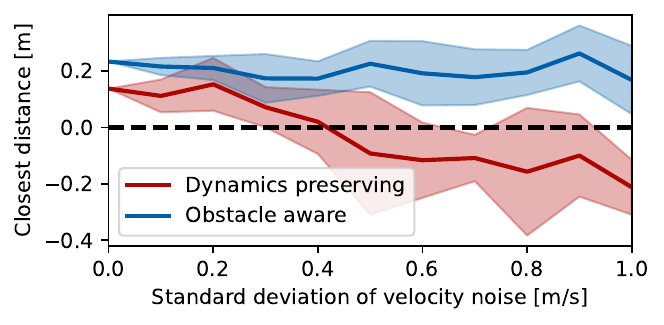}
	  \else
	  \includegraphics[width=\textwidth]{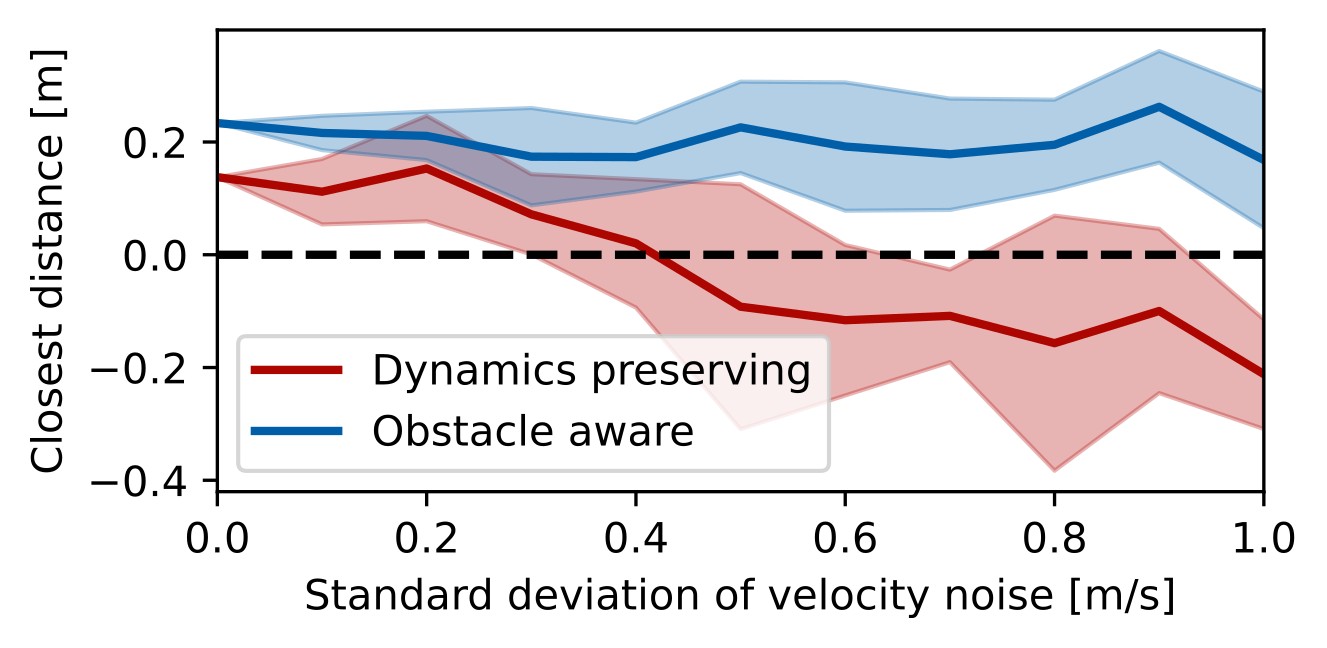}
	  \fi
	\fi
    \caption{The mean and variance (shaded) of the closest distances over 10 epochs.}
    \label{fig:comparison_velocity_noise}
    \end{subfigure}
	\caption{The agent navigates between two elongated obstacles (a) towards the attractor (black star). The agent's velocity is subjected to white noise, with a mean of zero, and noise variances between \qty{0.0}{m/s} and \qty{1.0}{m/s}. The robot initiates its trajectory from the starting position $\vecs \xi_0 = [-2.5, 1.0]^T$, with an initial velocity of zero. It aims to reach the attractor located at $\vecs \xi^a = [2.5, \, -1.0]^T$.}
\label{fig:velocity_noise}
\end{figure}


\subsubsection{Velocity Noise Resistance}
\editcolor{
We first added a normally distributed noise with a zero mean to the agent's velocity $\dot{\vecs{ \xi}}$ at each time step before computing the control force. The agent has to navigate between two obstacles from the starting position to the attractor with different noise levels, as visualized in Figure~\ref{fig:velocity_noise}). }

\editcolor{
The obstacle-aware controller effectively rejects the noise impacting the velocity, even as the noise variance increases. However, the closest distance during the trajectory diminishes with higher noise variance. On the contrary, the velocity-following controller's mean distance falls below zero already at a velocity noise variance of $\qty{0.5}{m/s}$, indicating that many trajectories collide with at least one obstacle.
Furthermore, the obstacle-aware controller maintains a higher minimal distance along the trajectory without noise. This effect results from the higher damping applied towards the obstacle, enabling more precise tracking of the curvature guiding the velocity around the obstacle.
}

\subsubsection{Position Noise Resistance} \label{sec:position_noise}
\editcolor{
In the second experiment, normally distributed noise with a mean of zero is added to the position $\vecs{ \xi}$ (Fig.~\ref{fig:position_noise}). The agent has to navigate around two star-shaped obstacles to reach the attractor with various noise levels being applied.}

\editcolor{The obstacle-aware controller maintains a greater distance to the surface even when no noise is present, while the velocity-preserving controller already exhibits collisions. As a result, the obstacle-aware creates trajectories with the mean distance above zero for standard deviations of the position noise smaller than \qty{0.023}{m}. Conversely, the velocity-preserving controller's mean distance to the surface is below zero for all experiment runs, indicating collisions.
This difference is attributed to the buffer mechanism inherent in the obstacle-aware controller. Despite a similar decrease in distance for both controllers, the obstacle-aware controller effectively prevents collisions due to its higher damping of velocities toward the obstacles.
Moreover, the velocity-preserving controller exhibits a higher distance variance, likely stemming from its lower damping, causing it to adapt more slowly to new velocities after being displaced by the noise. Consequently, this behavior leads to more random variations in velocity and trajectory.
}
\begin{figure}[htbp]
    \centering
    \begin{subfigure}{\columnwidth}
	\centering
	\ifthesis
    \centerline{\includegraphics[width=0.7\textwidth]{figures/trajectory_position_noise}}
	\else
	  \iflong
	  \centerline{\includegraphics[width=\textwidth]{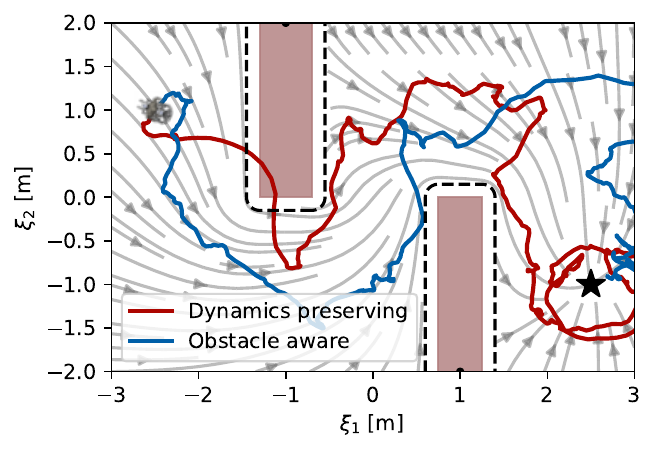}}
	  \else
	  \centerline{\includegraphics[width=\textwidth]{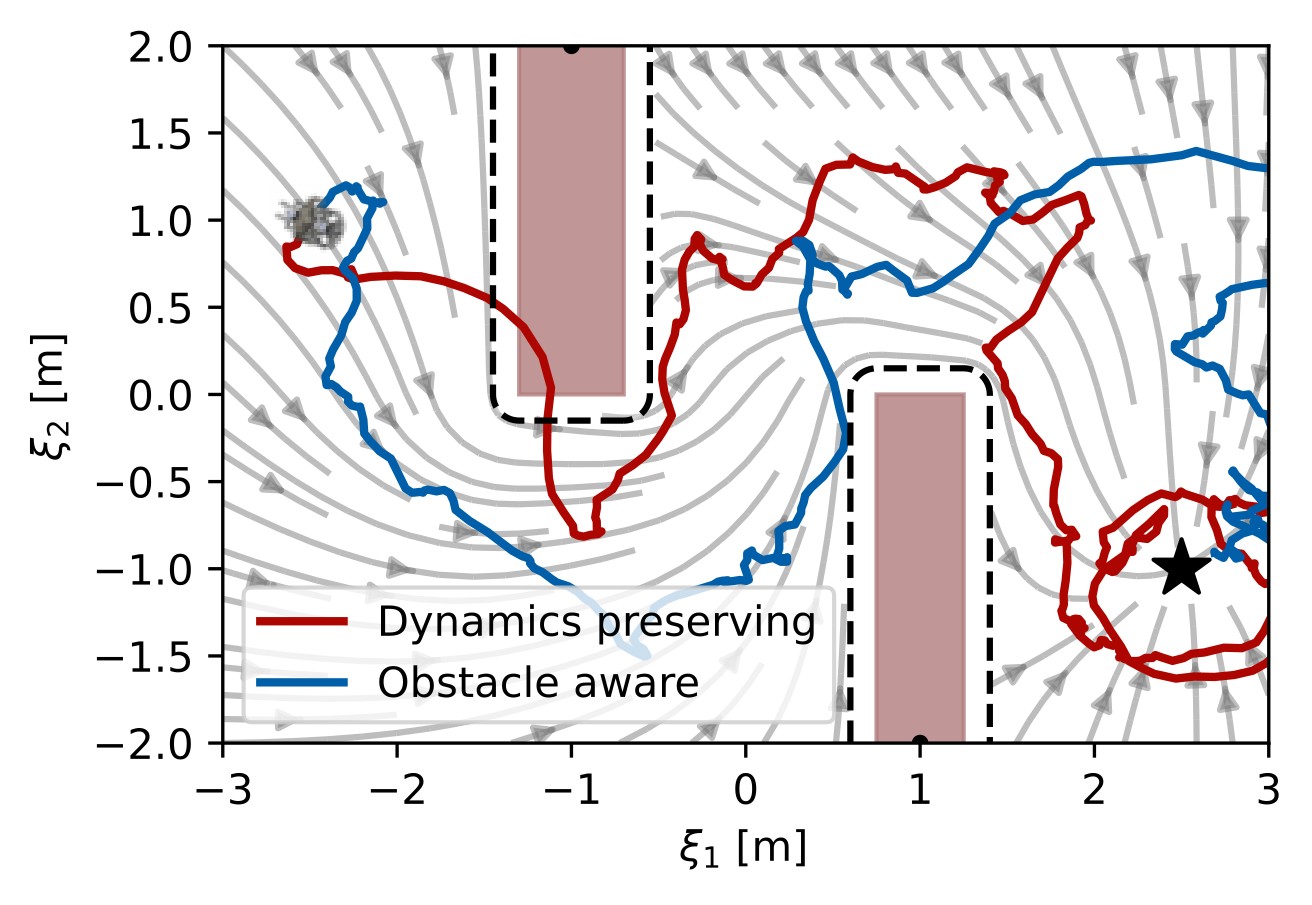}}
	  \fi
	\fi
    \caption{Trajectories with a standard deviation of the position-noise of 0.0 m.}
    \label{fig:trajectory_position_noise}
    \end{subfigure}
    \begin{subfigure}{\columnwidth}
	\centering
	\ifthesis
    \includegraphics[width=0.7\textwidth]{figures/comparison_position_noise}
	\else
	  \iflong
      \includegraphics[width=\textwidth]{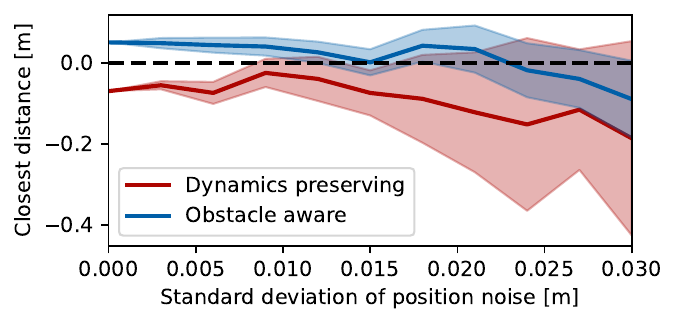}
	  \else
      \includegraphics[width=\textwidth]{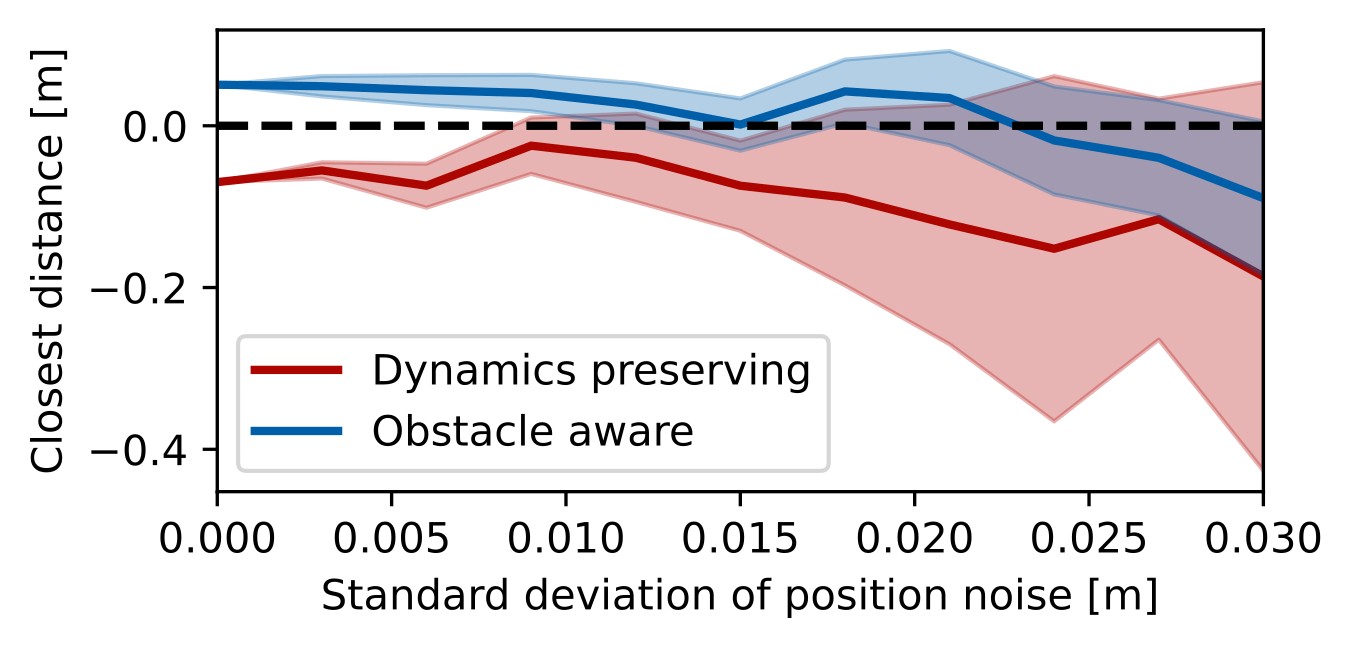}
	  \fi
	\fi
    \caption{Closest distances concerning different noise levels over ten epochs.}
    \label{fig:comparison_position_noise}
    \end{subfigure}
	\caption{
 The agent is navigating towards the attractor (black star) between two concave obstacles (a) while being subjected to white noise in its position. The position noise has a mean of zero and various noise variances ranging between \qty{0.0}{m} and \qty{0.03}{m}. The robot starts at position $\vecs \xi_0 = [-2.5, -1.0]^T$, and the attractor is set to $\vecs \xi^a = [2.5, 1.0]^T$.
 }
\label{fig:position_noise}
\end{figure}

\subsection{Obstacle-Aware Passivity Using a Robot Arm}
The obstacle-aware passivity controller was implemented to guide a 7-degree-of-freedom robot arm (Panda from Franka Emika) while moving around a cubic obstacle. 

\ifthesis
\begin{figure}[htbp]
    \centering
   \begin{subfigure}{\columnwidth}
    \includegraphics[width=0.4\textwidth]{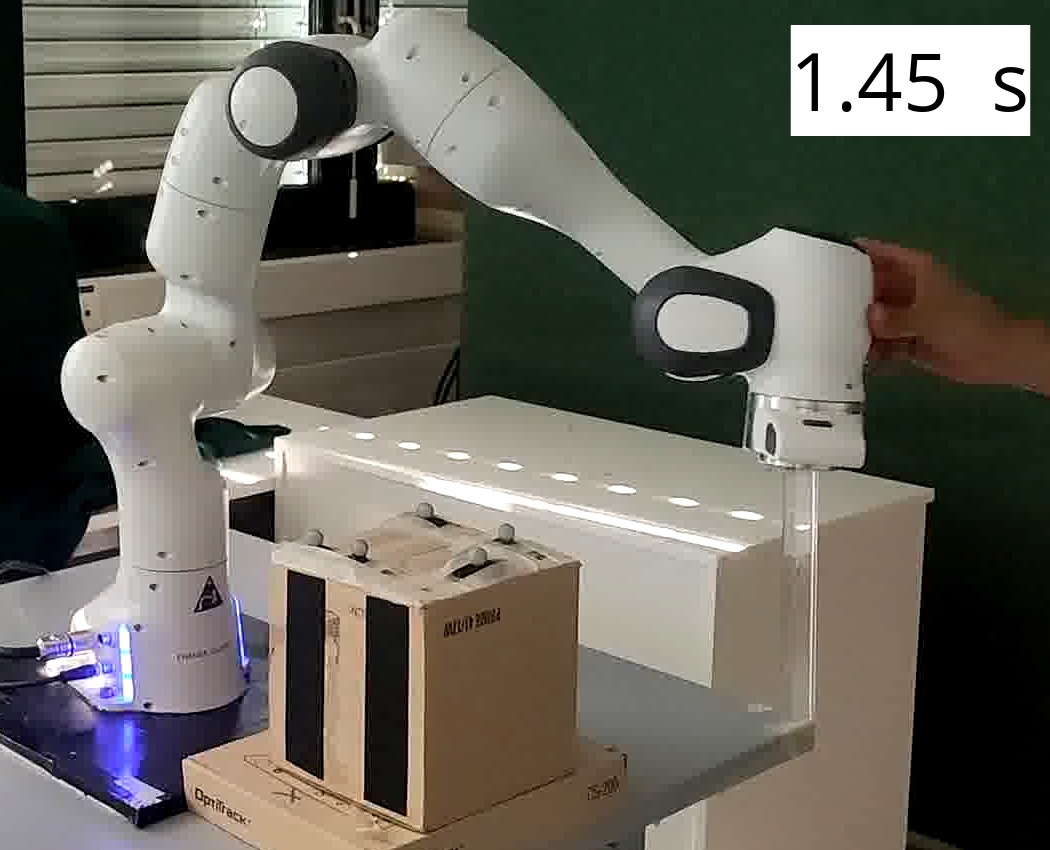}\hfill%
    \includegraphics[width=0.4\textwidth]{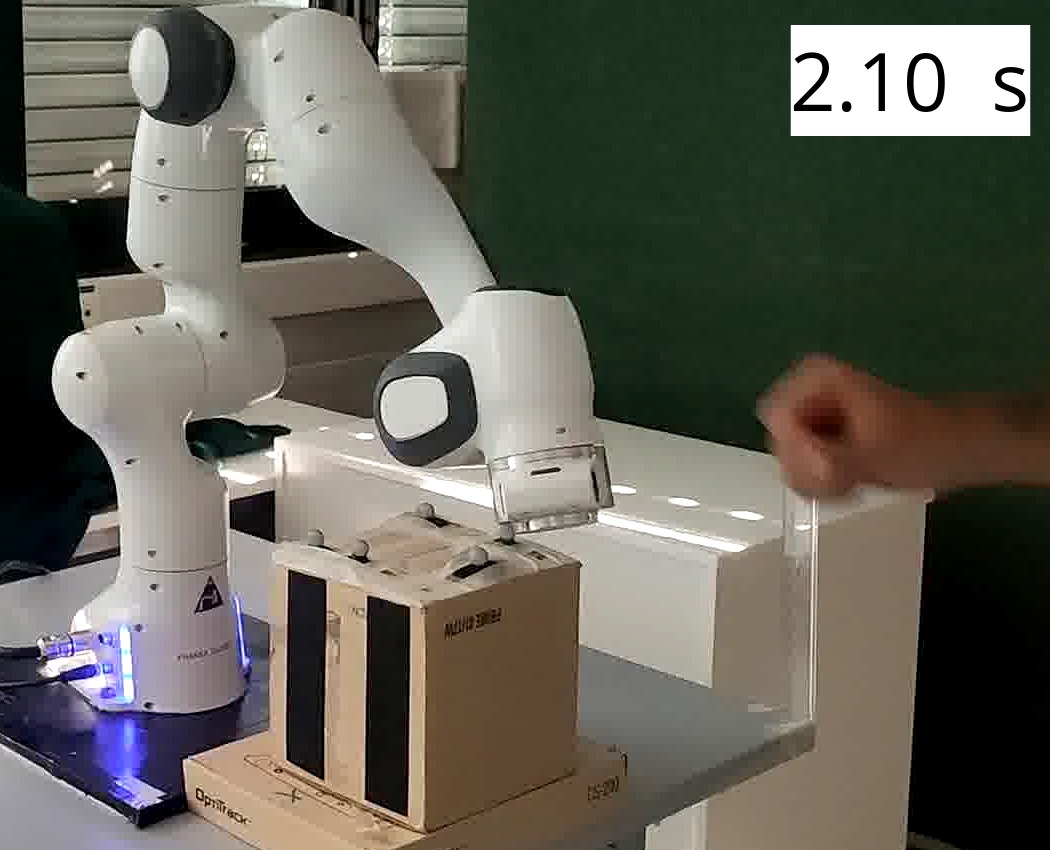}
      \caption{Obstacle-aware controller rejects repulsion and avoids collision}
      \label{fig:franka_sequence_obstacle_aware}
    \end{subfigure}
	\begin{subfigure}{\columnwidth}
    \includegraphics[width=0.4\textwidth]{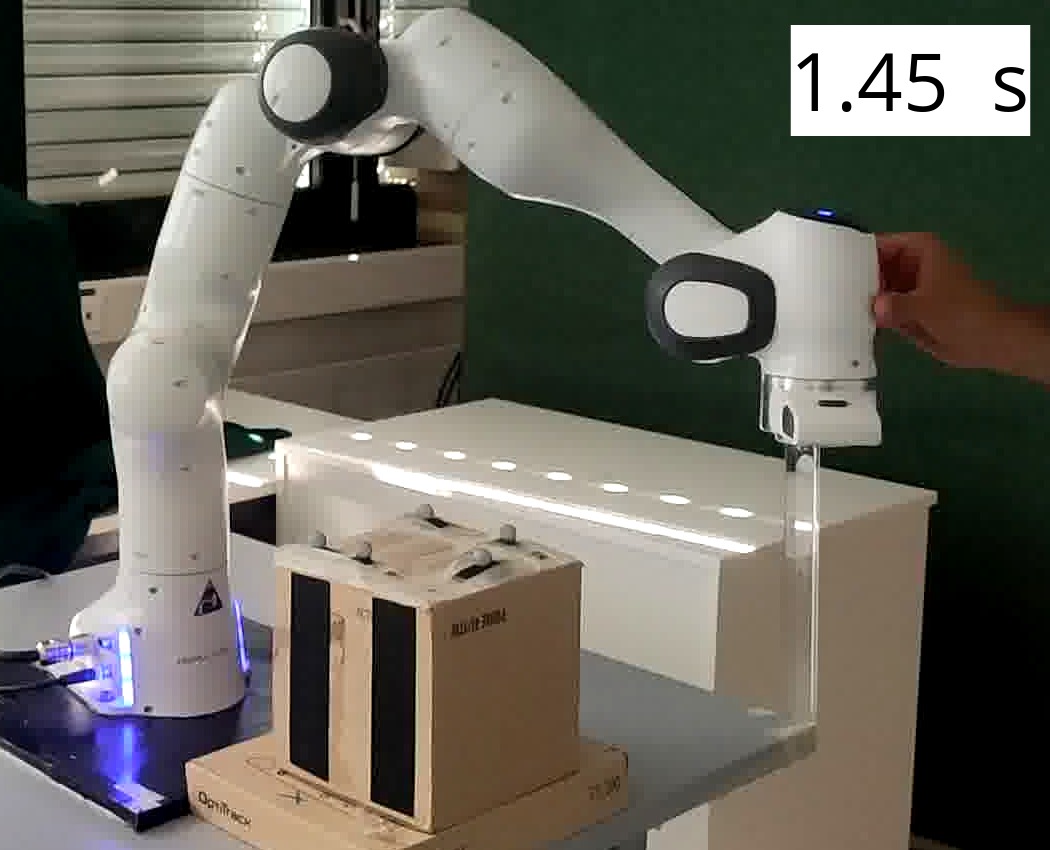}\hfill%
    \includegraphics[width=0.4\textwidth]{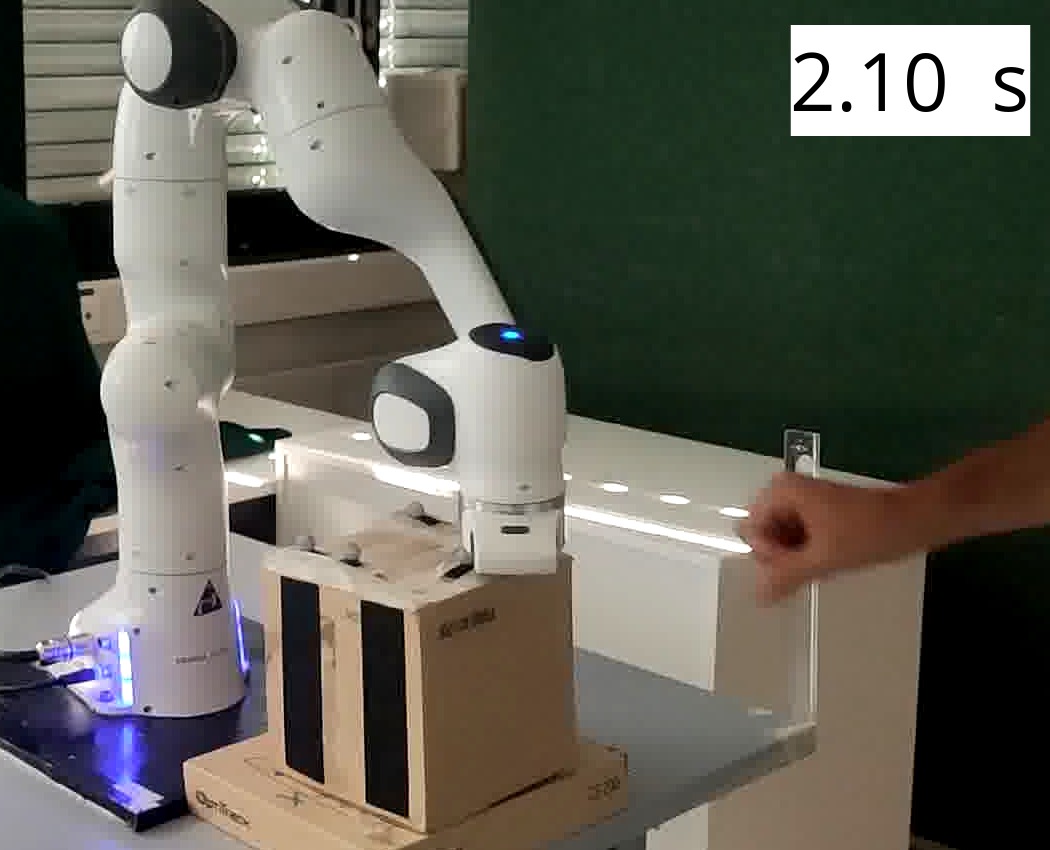}\hfill%
      \caption{Velocity preserving controller leads to collision with obstacle}
      \label{fig:franka_sequence_obstacle_aware}
    \end{subfigure}
	\caption{The robot arm, guided by the obstacle-aware passive controller, effectively avoids the disturbance towards the obstacle while maintaining a margin of \qty{0.16}{m} around the obstacle.}
	\label{fig:instances_evaluation_on_robot_arm}
\end{figure}

\begin{figure}[htbp]
    \begin{subfigure}{\columnwidth}
      \centerline{\includegraphics[width=\textwidth]{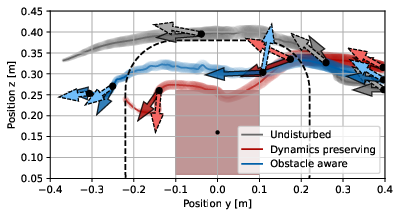}}
      \caption{The two control methods compared with the undisturbed trajectory. The wider line indicates a higher x-value. The darker arrow is the actual, and the desired velocity is the brighter arrow.}
      \label{fig:robot_arm_trajectory_xyz}
    \end{subfigure}
    \begin{subfigure}{\columnwidth}
		\includegraphics[width=\textwidth]{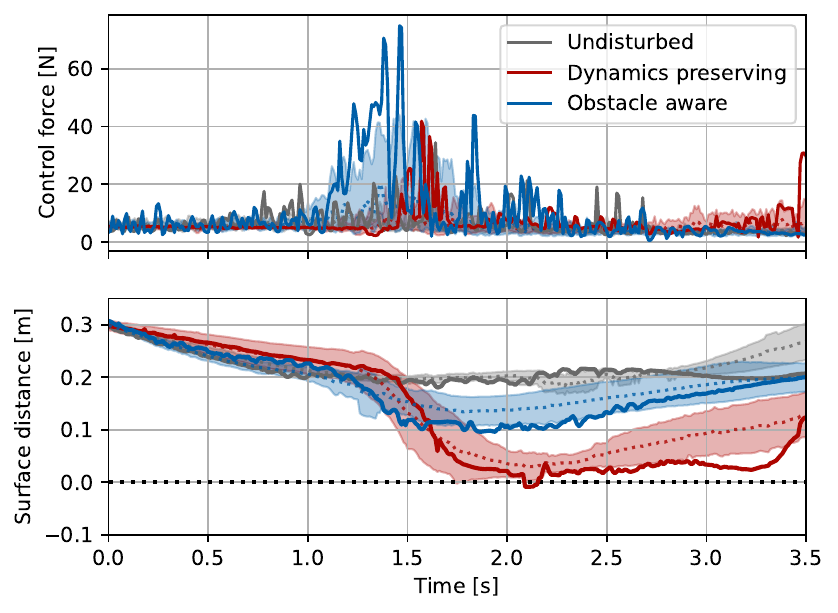}
      \caption{The specific trajectory is represented by a full line, while the average (dashed line) and variance (shaded area) are evaluated over ten epochs. The mean and variance of the control force are evaluated in the logarithmic space.}
      \label{fig:trajectory_comparison_force_and_distance}
    \end{subfigure}
	\caption{
 The experiment was repeated ten times with a similar disturbance applied to the robot arm in each run.
 }  
    \label{fig:evaluation_on_robot_arm}
\end{figure}
\
\else
\begin{figure}
    \centering
   \begin{subfigure}{\columnwidth}
    \includegraphics[width=0.49\textwidth]{figures/franka_sequence/franka_obstacle_aware016}\hfill%
    \includegraphics[width=0.49\textwidth]{figures/franka_sequence/franka_obstacle_aware020}
      \caption{Obstacle-aware controller rejects repulsion and avoids collision}
      \label{fig:franka_sequence_obstacle_aware}
    \end{subfigure}
	\begin{subfigure}{\columnwidth}
    \includegraphics[width=0.49\textwidth]{figures/franka_sequence/franka_velocity_conserving021}\hfill%
    \includegraphics[width=0.49\textwidth]{figures/franka_sequence/franka_velocity_conserving025}\hfill%
      \caption{Velocity preserving controller leads to collision with obstacle}
      \label{fig:franka_sequence_obstacle_aware}
    \end{subfigure}
    \begin{subfigure}{\columnwidth}
      \centerline{\includegraphics[width=\textwidth]{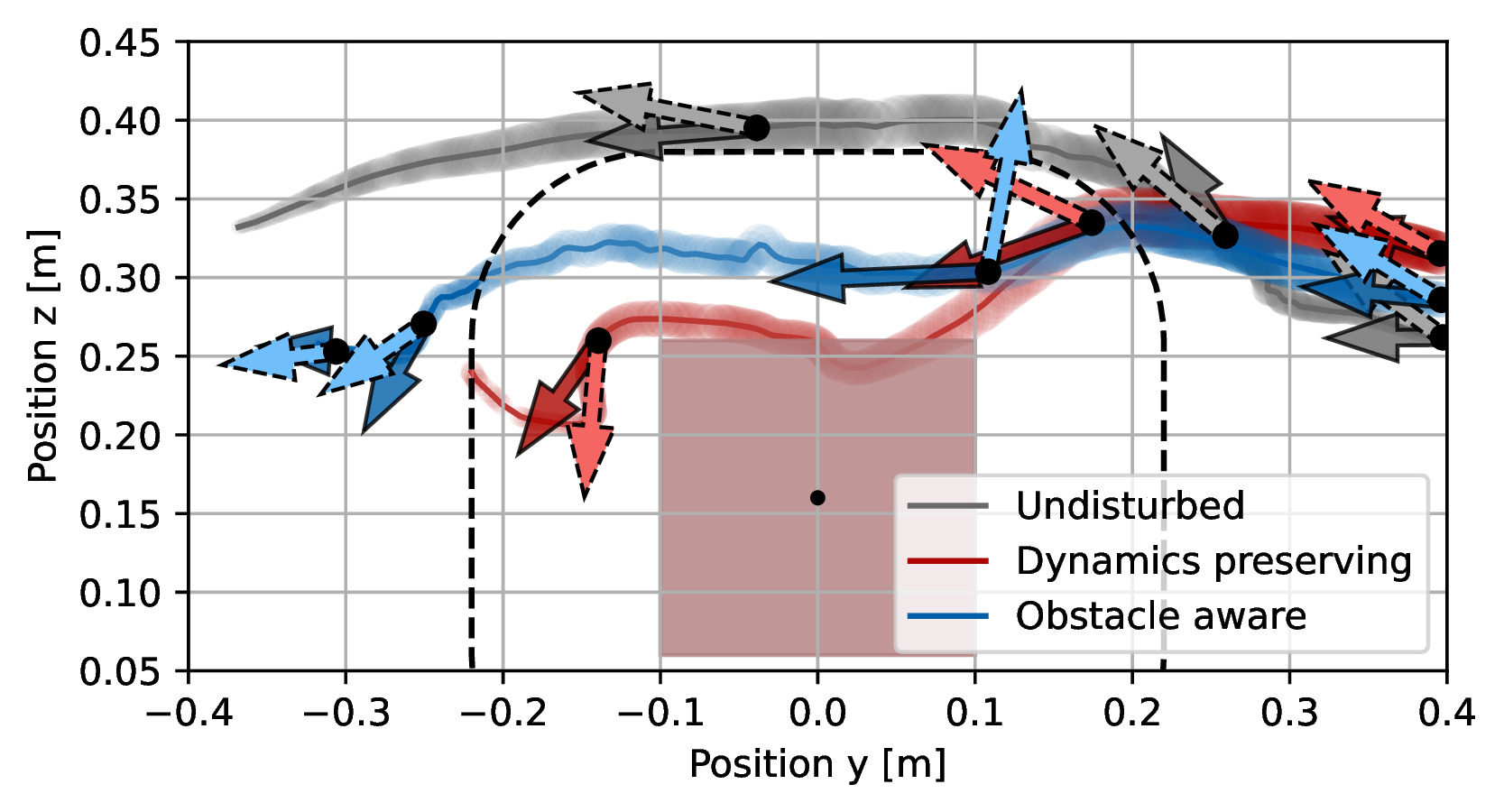}}
      \caption{The two control methods compared with the undisturbed trajectory. The wider line indicates a higher x-value. The darker arrow is the actual, and the desired velocity is the brighter arrow.}
      \label{fig:robot_arm_trajectory_xyz}
    \end{subfigure}
    \begin{subfigure}{\columnwidth}
		\includegraphics[width=\textwidth]{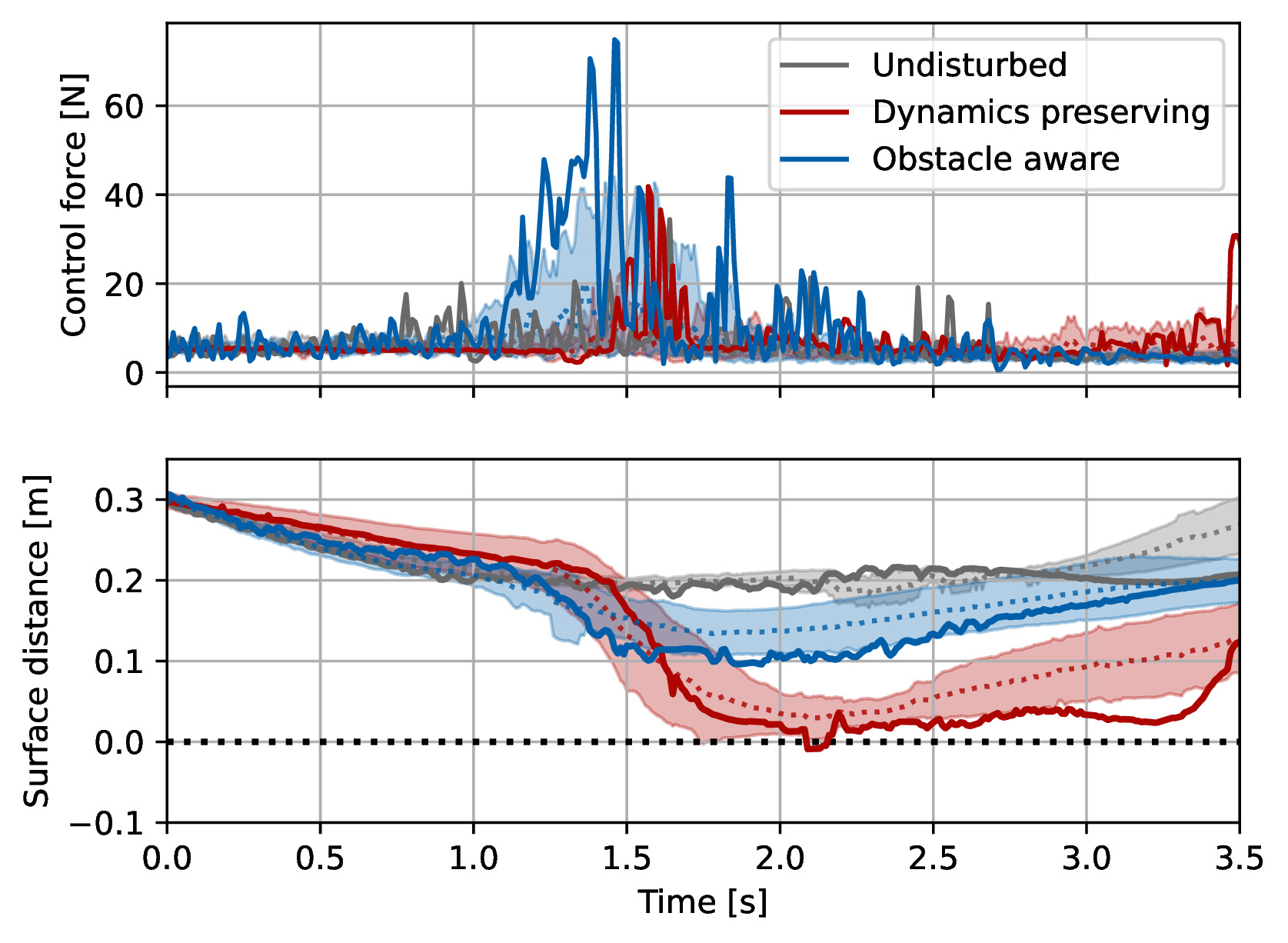}
      \caption{The specific trajectory is represented by a full line, while the average (dashed line) and variance (shaded area) are evaluated over ten epochs. The mean and variance of the control force are evaluated in the logarithmic space.}
      \label{fig:trajectory_comparison_force_and_distance}
    \end{subfigure}
	\caption{
The robot arm, guided by the obstacle-aware passive controller, effectively avoids the disturbance towards the obstacle while maintaining a margin of \qty{0.16}{m} around the obstacle. The experiment was repeated ten times with a similar disturbance applied to the robot arm in each run.
 }  
    \label{fig:evaluation_on_robot_arm}
\end{figure}
\fi

The joint torque is computed using inverse kinematics combined with the proposed passive controller for the position, but a proportional controller for the orientation:
\begin{equation}
	\vecs \tau_q = \matr J^{\dag}(\vect q) 
	\begin{bmatrix} \matd D(\vecs \xi) \left( \vect f(\vecs \xi) - \dot{\vecs \xi} \right) \\  p^\alpha (\vecs \alpha - \vecs \alpha^a) \end{bmatrix}
\end{equation}
where $\matr J^{\dag}$ represents the Moore-Penrose pseudo inverse of the Jacobian matrix, and $\vecs \alpha$ and $\vecs \alpha^a$ denote the end effector's orientation and the desired orientation, respectively. The angular damping parameter is chosen as $p^\alpha = 5.5$.
The desired orientation $\vecs \alpha^a$ is pointing downward with a quaternion value of $(w, x, y, z) = (0, 1, 0, 0)$. For the angle subtraction, we use quaternion representation to avoid singularities, but an angle-axis representation of the orientation is used to evaluate the torque from the angular offset.

The angular damping is chosen as $p^\alpha = 5.5$.
The damping values are set as
$s^{\mathrm{o}}=$~\qty{160}{s^{-1}},
$s^{\mathrm{f}}=$~\qty{64}{s^{-1]}}, and
$s^{\mathrm{c}}=$~\qty{16}{s^{-1}}.

The robot start position is approximately at $\vecs \xi_0 = [0.3\mathrm{m}, 0.4\mathrm{m}, 0.3\mathrm{m},]^T$ and the attractor is at position $\vecs \xi^a = [0.26 \mathrm{m}, -0.53\mathrm{m}, 0.33\mathrm{m}]^T$. The controller operates at a frequency of \qty{500}{Hz}.
The robot encounters a single squared box with axes length \qty{0.16}{m} and a margin of \qty{0.12}{m}, placed in front of the robot base (Fig.~\ref{fig:evaluation_on_robot_arm}). The precise location of the box is tracked in real-time using a marker-based vision system (Optitrack). When passing the box, the robot is pushed with $\vecs t^e$ towards the box. The experiment is repeated ten times for both controllers and compared to the undisturbed motion.

\iflong
The experimental results in \ifthesis Figure~\ref{fig:instances_evaluation_on_robot_arm} and \fi Figure~\ref{fig:evaluation_on_robot_arm} demonstrate that the obstacle-aware controller allows the robot to maintain an average distance of \qty{0.15}{m} away from the obstacle surface. In contrast, the velocity-preserving controller results in a mean distance below \qty{0.05}{m}, and numerous trajectories lead to collisions with the box. On average, the robot passes further away from the obstacle using the obstacle-aware controller and exhibits higher forces, Table~\ref{tab:evaluation_on_robot_arm}.

\begin{table}[htb]
    \centering
    \begin{tabular}{|l|c|c||c|} \hline
        & Obstacle & Velocity & Undisturbed \\ \hline
        Closest Distance [mm] &  {105 $\pm$ 21} & 18 $\pm$ 21 & 167 $\pm$ 9 \\ \hline
        Maximum Force [N] & {4.12 $\pm$ 0.25} & 3.58 $\pm$ 0.24 & 2.86 $\pm$ 0.37  \\ \hline 
    \end{tabular}
    \caption{The mean and standard deviation of the closest distance and maximum force are evaluated over ten epochs for the proposed obstacle aware controller, and the baseline velocity following controller \parencite{kronander2015passive}.
    The undisturbed trajectory is provided as reference values.
    During the experiments, the obstacle-aware damping avoids collision with an increased minimal distance from the obstacle and, hence, a safer trajectory. This is achieved through momentarily higher forces.}
    \label{tab:evaluation_on_robot_arm}
\end{table}
\fi

This outcome is attributed to the obstacle-aware controller's stronger control force, with a high peak occurring around \qty{1.45}{s} when the disturbance is encountered. In contrast, the velocity-preserving controller only acts when the robot almost collides, leading to a delayed response. Additionally, the obstacle-aware controller exhibits higher forces, contributing to improved tracking of the avoidance trajectory\iflong, as observed in Section~\ref{sec:position_noise}\fi. These findings affirm the superior collision avoidance capabilities and tracking performance of the obstacle-aware passivity controller in real-world robot arm scenarios.

\section{Discussion}
We introduced a novel passive obstacle-aware controller that takes as an input the desired, collision-free velocity and outputs a control torque. 
The stability proof enables the controller to be used with any bounded input velocity field, and this result extends to a general class of damping controllers.
Furthermore, the controller is shown to reject disturbances, and the parameter-tuning for the discrete-time system has been analyzed. 
The controller was experimentally evaluated and compared to a baseline passive controller. 
\iflong The proposed approach shows increased resistance to noise, both in position and velocity and improved tracking. \fi
Applied to a real robot arm, the disturbance force was successfully rejected, ensuring collision avoidance while following a reference motion.

\iflong
\subsection{Discrete Control Convergence}
The proposed controller is only stable but does not ensure convergence. However, convergence could be achieved by modifying the controller by introducing an additional proportional term, as is common in impedance controllers:
\begin{equation}
	\vecs{\tau_c} = \vect g (\vecs\xi) 
	+ \matd{D}(\vecs\xi) \left(\vecs f(\vecs\xi) - \vecs{\dot\xi} \right) 
	+ \matd{K}(\vecs \xi) \left(\vecs \xi - \vecs \xi^a \right)
\label{eq:control_command_proportional}
\end{equation}
However, this design interferes with the basis passive controller, and the convergence towards the attractor is facilitated by (stable) velocity $\vect f(\vecs \xi)$. 
Moreover, introducing a variable damping matrix will result in a potentially unstable system, as the passivity proof no longer holds.
Future work should further analyze how to include dynamics properties in the design of the damping parameters for improved system performance.
\fi

\subsection{Applicability and Theoretical Analysis}
The theoretical analysis from Theorem~\ref{theorem:passivity} indicates BIBO stability for a system with bounded desired velocity. Consequently, controllers like the damping-based approach in \parencite{kronander2015passive} do not require an energy tank. Yet, introducing an adaptive proportional term $\mathcal{K}$ can lead to instabilities \parencite{ferraguti2013tank, kronander2016stability}. 
Careful consideration of the controller design and stability analysis is necessary to ensure robust and safe performance in practical applications. 
Nevertheless, the passivity proof enables a broad range of time-varying damping controllers to be safely applied to robotic systems.

\iflong
\subsection{Point Mass Agents}
While the controller is limited to a point mass representation of the robot, its computational efficiency would allow multiple evaluations along the robot arm. This could be used to ensure full body control of the robot, hence improving the system's safety.
\fi

\subsection{Caution in Obstacle's Proximity}
In this work, we assume the obstacles' position to be precisely known. However, in many scenarios, the robot might have limited perception as it approaches an obstacle. Hence, the robot should be more compliant to enable safe workspace exploration rather than increasing the damping. Future work should explore how to combine these two opposing paradigms: safe control for avoidance yet cautious exploration.

\iflong
\appendix
\ifthesis
\section{Discrete Time Behavior} \label{sec:discrete_time_behavior}
\else
\subsection{Discrete Time Behavior} \label{sec:discrete_time_behavior}
\fi

So far, we've considered that the system is continuous time. 
This is a reasonable assumption if we have a high sampling time $\Delta t$, compared to the dynamics, i.e., $\| \Delta t \, \dot{\vecs \xi} \| \ll 1$.
However, any digital controller sends a discrete control signal. To reject the disturbances in the presence of obstacles, a high control force is exerted (while remaining within the control robot's limits), hence $\| \Delta t \, \vecs \tau^c \| \gg 1$. 
In this case, it is crucial to analyze the discrete-time system to guarantee stability, as high damping can lead to unstable behavior, see Figure~\ref{fig:discrete_controller_parameters_comparison}).

\begin{figure}[htbp]
\centering
\ifthesis
  \includegraphics[width=0.7\columnwidth]{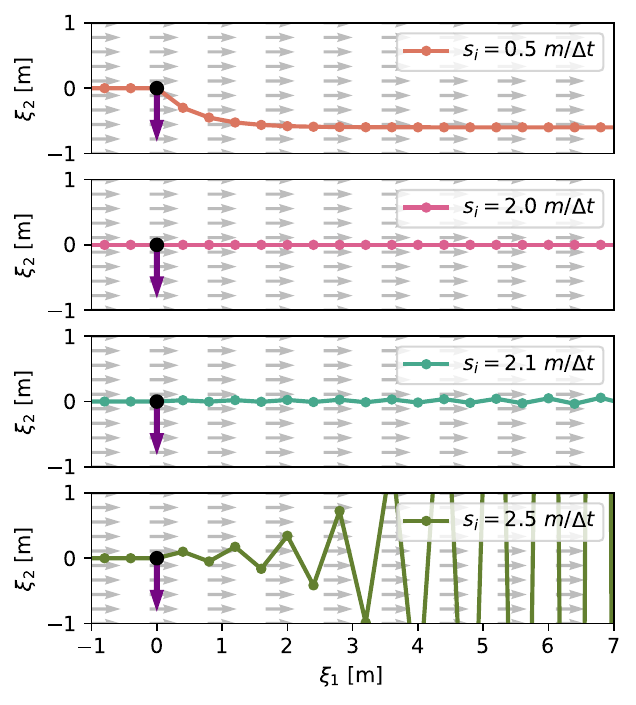}
  \else
  \includegraphics[width=\columnwidth]{figures/discrete_controller_parameters_comparison}
  \fi
  \caption{An agent has the desired velocity $\vect f(\vecs \xi) = [1, 0]^T$ (gray arrow), and a control matrix $\matd{D}$ with damping values equal in all directions and the smallest eigenvalue of the inertia matrix $m$. 
 The agent is disturbed (purple arrow) position $\vecs \xi_0 = [0, 0]^T$ and has a velocity of  $\vecs \xi_1 = [1, 1]^T$ after the impact. \\
  High damping leads to unstable behavior with increasingly high oscillations. Conversely, the lowest value leads to more deviation from the initial straight line resulting from the higher impedance. The critical value of $s_i = 2.0 m / \Delta t$ results in stable behavior with immediate correction to the desired velocity.}
  \label{fig:discrete_controller_parameters_comparison}
\end{figure}

For the discrete-time system, the position and velocity of the agent evolve as follows:
\begin{equation}
	\begin{bmatrix}
	 \vecs \xi_{t+1} \\ \dot{\vecs \xi}_{t+1}
	\end{bmatrix}
	=
	\begin{bmatrix}
	 \vecs \xi_{t} \\ \dot{\vecs \xi}_{t}
	\end{bmatrix}
	+ 
	\Delta t 
	\begin{bmatrix}
		\dot{\vecs \xi}_{t} \\ \ddot{\vecs \xi}_t 
	\end{bmatrix}
	\label{eq:discrete_time_dynamics}
\end{equation}

\begin{lemma}
	Let us consider a discrete-time system with the state as given in Eq.~\eqref{eq:discrete_time_dynamics}, and is governed by the controller from Eq.~\eqref{eq:control_command} and damping matrix $\matd{D}$ defined in Eq.~\eqref{eq:damping_summation}.
	The system is BIBO (bounded-input, bounded-stable) with respect input the desired velocity $\vect f(\vecs \xi)$ the velocity, and as output the agent's velocity $\dot{\vecs \xi}$ such that $\lim_{t \rightarrow \infty} \| \dot{\vect \xi} \| < \infty$, if all damping values are limited as $s_{d} \leq 2 \min \left( \text{eig}\left(\matd{M}\right)  \right) / \Delta t$ with $d=1, ..., N$.
\end{lemma}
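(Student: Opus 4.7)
The plan is to reduce the lemma to a linear discrete-time analysis of the velocity update and then bound the spectral radius of the resulting state transition matrix. Starting from the rigid body dynamics \eqref{eq:robot_dynamics} with the passive control law \eqref{eq:control_command}, the gravity term cancels and, over the short sampling interval $\Delta t$, the Coriolis contribution is of higher order in $\Delta t$ and can be absorbed into a bounded perturbation (this is consistent with the point-mass-style figure~\ref{fig:discrete_controller_parameters_comparison}). The velocity update from \eqref{eq:discrete_time_dynamics} therefore becomes
\begin{equation}
\dot{\vecs \xi}_{t+1} = \bigl( \matd I - \Delta t \, \matd M^{-1} \matd D \bigr) \dot{\vecs \xi}_{t} + \Delta t \, \matd M^{-1} \matd D \, \vect f(\vecs \xi_t) + \Delta t \, \matd M^{-1} \vecs \tau^e_t .
\end{equation}
I would treat this as a linear system with state $\dot{\vecs \xi}_t$ and bounded exogenous input $\vect f(\vecs \xi_t)$ (bounded by assumption) plus the disturbance $\vecs \tau^e$, and ask when the transition matrix $\matd A = \matd I - \Delta t \, \matd M^{-1} \matd D$ is Schur stable, i.e. $\rho(\matd A) < 1$.

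Next I would diagonalise $\matd D$ using its decomposition $\matd D = \matd Q \matd S \matd Q^{-1}$ from \eqref{eq:damping_matrix}, and introduce the transformed variable $\vecs{\bar v} = \sqrt{\matd S}\, \matd Q^{-1} \dot{\vecs \xi}$ already used in the continuous-time proof of Theorem~\ref{theorem:passivity}. In this basis the generalised eigenvalues of $\matd M^{-1} \matd D$ are bounded above by $s_d / m_{\min}$ with $m_{\min} = \min(\text{eig}(\matd M))$, so every eigenvalue $\mu$ of $\matd A$ satisfies
\begin{equation}
| 1 - \Delta t \, \mu_{MD} | \leq \max_d \bigl| 1 - \Delta t \, s_d / m_{\min} \bigr| .
\end{equation}
Requiring this to be strictly less than $1$ yields the two-sided condition $0 < \Delta t \, s_d / m_{\min} < 2$, and since $s_d > 0$ this collapses to the announced bound $s_d \leq 2 m_{\min} / \Delta t$ (with strict inequality for strict contraction; the boundary case is handled separately as marginally stable).

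From $\rho(\matd A) < 1$, BIBO stability follows by the standard argument: the solution of the linear recursion is $\dot{\vecs \xi}_{t} = \matd A^{t} \dot{\vecs \xi}_0 + \Delta t \sum_{k=0}^{t-1} \matd A^{t-1-k}\matd M^{-1}\bigl(\matd D \vect f(\vecs \xi_k) + \vecs \tau^e_k\bigr)$, whose norm is bounded by a convergent geometric series in $\rho(\matd A)$ times $\sup_k \| \vect f(\vecs \xi_k) \| \leq v^{\mathrm{max}}$ and $\sup_k \|\vecs \tau^e_k\|$, giving a finite limit supremum on $\| \dot{\vecs \xi}_t \|$. The main obstacle I expect is the non-commutativity of $\matd M$ and $\matd D$: because $\matd Q$ need not diagonalise $\matd M$, the eigenvalues of $\matd M^{-1}\matd D$ are not simply $s_d / m_d$, and I will need to use the $m_{\min}$ worst-case bound (or equivalently a congruence transform with $\matd M^{1/2}$) to obtain a basis-independent contraction estimate. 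The neglected Coriolis term is the secondary concern, but since $\matd C$ is linear in $\dot{\vecs \xi}$ and multiplied by $\Delta t$, it contributes only a higher-order perturbation to $\matd A$ that preserves the stability margin for $\Delta t$ small enough to satisfy the stated bound.
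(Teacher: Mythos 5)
Your proposal is correct and follows essentially the same route as the paper: both reduce the question to the Schur stability of $\matd I - \Delta t\, \matd M^{-1}\matd D$ (the paper keeps the full block-triangular position--velocity state matrix, whose extra unit eigenvalues do not affect the velocity output) and arrive at the identical bound $s_d \leq 2\min(\text{eig}(\matd M))/\Delta t$. Your explicit worst-case eigenvalue estimate for $\matd M^{-1}\matd D$ via $m_{\min}$ actually fills in a step the paper leaves implicit when passing from $|\lambda|\leq 1$ to the stated condition on the $s_d$.
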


\begin{proof}
The evolution of the discrete-time feedback system is given as:
\begin{equation}
	\begin{split}
	& \begin{bmatrix}
	 \vecs \xi_{t+1} \\ \dot{\vecs \xi}_{t+1}
	\end{bmatrix}
	=
	\begin{bmatrix}
		\vect \xi_t + \Delta t  \; \dot{\vect \xi}_t \\ \
		\dot{\vecs \xi}_t + \Delta t \, \matd{M}^{-1} \left( \matd{D} \left( \vect f(\vecs \xi_t) - \dot{\vecs \xi}_t \right) - \matd{C} \right)
	\end{bmatrix} \\
	&  = 
	\begin{bmatrix}
		\matr{I} & \matr{I} \Delta t \\
		\vect{0} & \matr{I} - \Delta t \matd{M}^{-1} \matd D 
	\end{bmatrix}
	\begin{bmatrix}
		\vect \xi_t \\ \dot{\vect \xi}_t
	\end{bmatrix}
	+ \begin{bmatrix}
		\vect{0} \\ 
		\Delta t \, \matd{M}^{-1} \matd{D} 
	\end{bmatrix}
	\hat{\vect f}(\vecs \xi_t) 
	\label{eq:discrete_time_proof}
	\end{split}
\end{equation}
where $\hat{\vect f}(\vecs \xi_t) = \vect f(\vecs \xi_t) - \matd{D}^{-1} C$.  The dependency on the state $\vecs \xi$ of the matrices $\matd{M}$, $\matd{D}$, and $\matd{C}$ are omitted for brevity. $\matr{I} \in \mathbb{R}^{N \times N}$ denotes the identity matrix.
As we look at global stability, we look at the updated velocity $\hat{\vect f}(\vecs \xi_t) = \vect f(\vecs \xi_t) - \matd{D}^{-1} \matd{C}$. Since the Coriolis force is bounded, it follows that if the system is BIBO stable for $\hat{\vect f}(\vect \xi_t)$, then it is also BIBO stable for $\vect f(\vecs \xi_t)$ 

BIBO stability of a discrete-time system requires that all the eigenvalues of the feedback matrix are smaller or equal to one \parencite{friedland2012control}.
The eigenvalues of the above feedback system are given as:
\begin{equation}
	\vect \lambda_1 = \text{eig}(\matr{I}) \qquad \vect \lambda_2 = \text{eig} \left(\matr{I} - \Delta t \, \matd{M}^{-1} \matd{D} \right)
\end{equation}
where both $\vect \lambda_1 \in \mathbb{R}^N$ and $\vect \lambda_2 \in \mathbb{R}^N$ denote a vector containing $N$ eigenvalues.
All eigenvalues of $\vect \lambda_{1, i} = 1, \; i = 1, ..., N$, and enable a stable system. 
The second set of eigenvalues $\vect \lambda_2$  depends on the passive control term. 
However, since $\matd{M}$ and $\matd{D}$ are both positive definite, the eigenvalues are positive:
\begin{equation}
	\matd{M} > 0 \; , \;\; \matd{D} > 0 
	\qquad \Rightarrow \qquad
	\vect \lambda_{2, i} > 0 \quad i = 1, ..., N
\end{equation}

Hence, we only have to consider the upper limit, and the system is stable if the maximum eigenvalue is limited to:
\begin{equation}
	\max \left(\vect \lambda_2 \right) \leq 1 
	\quad \Rightarrow \quad
	s_{i} \leq 2 \min \Bigl( \text{eig}\bigl(\matd{M}\bigr)  \Bigr) / \Delta t
\end{equation}
\end{proof}

The stability guarantees provide BIBO stability, hence the boundedness of the output. 
Since the first eigenvalues equal one, there is no global asymptotic convergence. 
In fact, in the system from \eqref{eq:discrete_time_proof}, it can be observed that when the input dynamics are zero, i.e., $\vect f(\vecs \xi) = \vect 0$, the system immediately stops. But there is no convergence to a specific position.
However, in most cases, the desired system should only reach zero at the attractor position, and hence, we expect convergence to such a point.

\ifthesis
Yet, asymptotic stability is not guaranteed for general nonlinear input dynamics. Especially for dynamics with high curvature and low damping value, the final trajectory can end up in limit cycles ((Figure~\ref{fig:discrete_controller_parameters_comparison_stable}).
\fi

In practice, it is useful to use a large value for $s^{o}$ as it rejects disturbances towards the obstacles, and lower values for the dynamics following $s^{f}$ and damping in general direction $s^{c}$. Hence, we propose $s^{o} = 2.0 m / \Delta t$, $s^{f} = 1.0 / \Delta t$, and $s^c = 0.1 / \Delta t$.

\begin{figure}[htbp]
\centering
\ifthesis
  \includegraphics[width=.7\columnwidth]{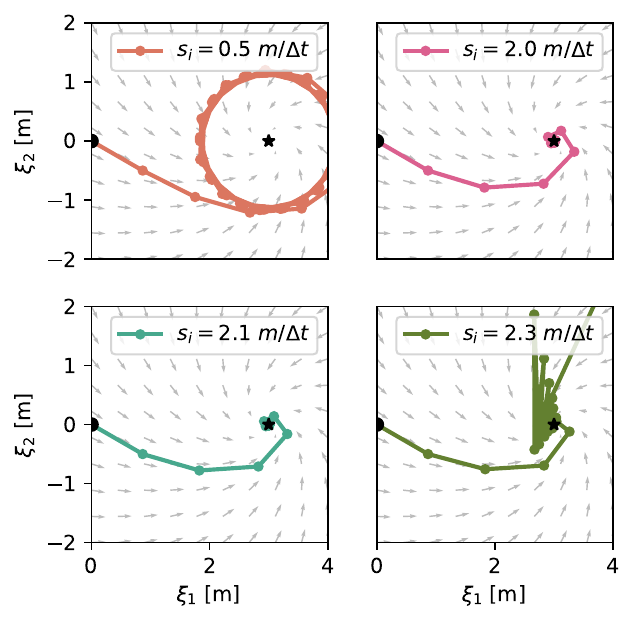}
  \else
  \includegraphics[width=\columnwidth]{figures/discrete_controller_parameters_comparison_stable}
  \fi
\caption{An agent with the desired dynamics of
$\vect f(\vecs \xi) = \matr R(\pi / 6) (\vect \xi  - \vect \xi^a)$ where $\matr R(\cdot) \in \mathbb{R}^{N \times N}$ is the rotation matrix, and $m$ is the mass of the agent. We assume equal damping values $s_i$.
The controller with a critical stiffness of $2.0 m / \Delta t$ leads to fast convergence and a stable system. With lower damping (top left), there is a large drift of the system, which converges to a limit cycle. 
The high damping of $2.3 m / \Delta t$ leads to an unstable system. 
Interestingly, with damping of $2.1 m / \Delta t$ in combination with the nonlinear dynamics, we obtain a visibly stable system.}
  \label{fig:discrete_controller_parameters_comparison_stable}
\end{figure}

\fi

\renewcommand*{\bibfont}{\footnotesize}
\printbibliography

\end{document}